\documentclass[twoside]{article}
%
\usepackage{amsmath}
\usepackage{amssymb}
\usepackage{booktabs}
\usepackage{color,graphicx,lscape,longtable}
\usepackage[linesnumbered,ruled,vlined]{algorithm2e}

\DeclareMathOperator*{\argmin}{arg\,min}

\def\Rbb{\mathbb{R}}

\def\v{\bm{v}}
\def\h{\bm{h}}
\def\Holder{{H\"{o}lder}}

\newcommand{\RN}[1]{%
  \textup{\uppercase\expandafter{\romannumeral#1}}%
}

\newtheorem{theorem}{Theorem}
\newtheorem{assumption}{Assumption}
\newtheorem{corollary}{Corollary}
\newtheorem{example}{Example}[section]
\newtheorem{remark}{Remark}[section]

\newenvironment{proof}[1][Proof]{\textbf{#1.} }{\
\qquad\qquad\rule{0.5em}{0.5em }}

\usepackage{times}
\usepackage{bm}

\usepackage[accepted]{aistats2020}
%


\usepackage[round]{natbib}


\begin{document}

%
\runningtitle{A Unified Statistically Efficient Estimation Framework for Unnormalized Models }

%

\twocolumn[

\aistatstitle{A Unified Statistically Efficient Estimation Framework \\ for Unnormalized Models}

\aistatsauthor{ Masatoshi Uehara \And Takafumi Kanamori \And Takashi Takneouchi\And Takeru Matsuda   }
\aistatsaddress{ Harvard University \And Tokyo Institute of Technology \\ RIKEN AIP \And Future University Hakodate \\ RIKEN AIP  \And The University of Tokyo \\ RIKEN Center for Brain Science  }
]

\begin{abstract}
The parameter estimation of unnormalized models is a challenging problem. The maximum likelihood estimation (MLE) is computationally infeasible for these models since normalizing constants are not explicitly calculated. Although some consistent estimators have been proposed earlier, the problem of statistical efficiency remains. In this study, we propose a unified, statistically efficient estimation framework for unnormalized models and several efficient estimators, whose asymptotic variance is the same as the MLE. The computational cost of these estimators is also reasonable and they can be employed whether the sample space is discrete or continuous. The loss functions of the proposed estimators are derived by combining the following two methods: (1) density-ratio matching using Bregman divergence, and (2) plugging-in nonparametric estimators. We also analyze the properties of the proposed estimators when the unnormalized models are misspecified. The experimental results demonstrate the advantages of our method over existing approaches. 
\end{abstract}

\section{INTRODUCTION}

Unnormalized models are widely used in many settings: Markov random fields \citep{BesagJulian1975SAoN}, Boltzmann machines \citep{HintonGeoffreyE.2002TPoE}, models in the independent component analysis \citep{HyvarinenAapo2001Ica}, submodular diversity models \citep{Tschiatschek} and generalized gamma distributions \citep{gamma}. When the parametric model is denoted as $p(x;\theta)$, $p(x;\theta)$ is called an unnormalized (aka intractable) model if its normalizing constant $\int p(x;\theta)\mathrm{d}\mu(x)$
cannot be explicitly calculated, or it is difficult to compute in practice. For example, when $\mu$ is a counting measure as in the case of Markov random fields and Boltzmann machines, the computational cost exponentially increases with the dimension of the sample space. When $\mu$ is a Lebesgue measure, as in the case of the models in independent component analysis or generalized gamma distributions, this cannot be analytically calculated . When we use unnormalized models, we believe that the true data generating process belongs to the family $\{p(x;\theta)/\int p(x;\theta)\mathrm{d}\mu(x),\theta\in \Theta\}$, where $\Theta$ denotes a parameter space. Unnormalized models $p(x;\theta)$ can be converted to  normalized models by dividing their normalizing constants; however, their explicit form cannot be obtained; therefore, an exact maximum likelihood estimation (MLE) is infeasible. 

Several approaches for the estimation of unnormalized models have been suggested. Roughly, there are two major approaches. First, noise contrastive estimation (NCE) \citep{noise2,noise,TCL,matsuda2} and contrastive divergence (CD) \citep{HintonGeoffreyE.2002TPoE} rely on sampling techniques, such as importance sampling and Markov Chain Monte Carlo. Second, score matching \citep{score,HyvärinenAapo2007Seos,DawidA.Philip2012PLSR,dawid2} and pseudo likelihood \citep{BesagJulian1975SAoN,VarinCristiano2011AOOC,LindsayBruceG.2011IASI} use a tractable form without the aid of a sampling technique. The first approach is generally superior to the second approach in terms of statistical efficiency, whereas the second approach is superior to the first approach in terms of computational efficiency, leaving a tradeoff between computational and statistical efficiency.

\begin{figure}[!]
    \centering
    \includegraphics[width=8cm]{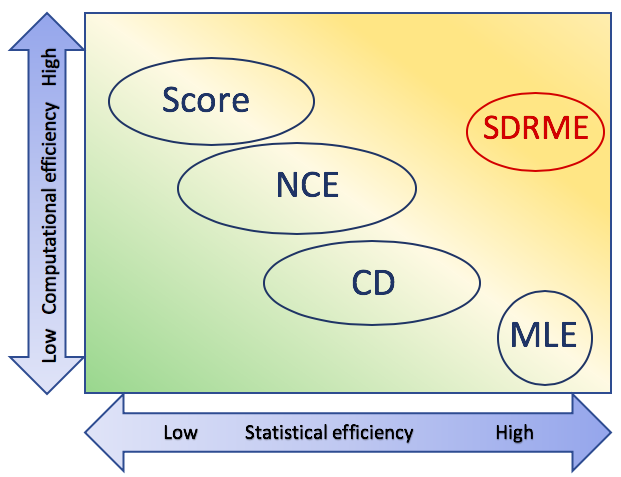}
    \caption{Comparison of methods, where Score stands for score matching; NCE stands for noise contrastive estimation, CD stands for contrastive divergence method; and a self density-ratio matching estimator (SDRME) is the proposed estimator. Note that statistically efficient estimators can be constructed in the case of NCE and CD. When the ratio of the auxiliary and original sample sizes is infinite in the NCE, the estimator becomes statistically efficient. However, implementing it is in practice infeasible. The same argument applies to the CD.}
    \label{fig:comparison}
\end{figure}

In the present study, we propose a unified framework for the statistically efficient estimation of unnormalized models irrespective of whether the sample space is discrete or continuous. The estimators are defined as a form of M-estimators \citep{VaartA.W.vander1998As} and their loss functions are derived by combining two methods: (1) density-ratio matching using Bregman divergence, and (2) plugging-in nonparametric estimators. These estimators are statistically efficient in the sense that the asymptotic variance is the same as that of the MLE; thus, the proposed estimators are superior to other previously proposed estimators in terms of statistical efficiency. Moreover, the proposed estimators do not rely on any sampling techniques and the evaluation cost of the objective function is $\mathcal{O}(n)$; therefore, they are competitive in terms of computational efficiency. Figure \ref{fig:comparison} illustrates a comparison of our proposed estimators to the other previously proposed estimators. To the best of our knowledge, the proposed estimators are the first statistically efficient estimators with evaluation cost $\mathcal{O}(n)$, which works in the continuous sample space.
 
Note that for a discrete sample space, \cite{takekana2017} proposed an efficient estimator, that can be seen as a special case from our proposed framework. Importantly, it is extended to the case of a continuous sample space based on the proposed framework.

\section{PRELIMINARIES}

Our general setting is as follows. Let us consider a situation in which an unnormalized model $p(x;\theta)$ is used, that is, for each $\theta\in \Theta$,
$p(x;\theta)$ is a non-negative function and the normalizing constant defined by the integral
$\int_{\mathcal{X}}p(x;\theta)\mathrm{d}\mu(x)$, is finite. The measure $\mu$ over the sample space $\mathcal{X}$ is a counting measure when the sample space is discrete, and a Lebesgue measure when the sample space is continuous. We refer to it herein as a baseline measure. We introduce a one-parameter extended model defined by $q(x;\tau)\equiv\exp(-c)p(x;\theta),\,\tau\equiv(c,\theta^{\top})^{\top}$ where $c$ is also regarded as a parameter.

Our aim is to estimate $\theta$ using a set of identically independent distributed (i.i.d) $n$ samples $\{x_{i}\}_{i=1}^{n}$ by assuming that these samples are obtained from the true distribution $\mathrm{F}_{\eta^{*}}$ with density $
\eta^{*}(x)$ with respect to the baseline measure $\mu$. Unless otherwise noted, we assume that the unnormalized model is well-specified, that is, there exists $\theta^{*}$ satisfying $\eta^{*}=\exp(-c^{*})p(x;\theta^{*}),\,\exp(c^{*})=\int p(x;\theta^{*})\mathrm{d\mu}(x)$.
The problem of unnormalized models arises because it is extremely difficult or infeasible to calculate the normalizing constant analytically. In such a case, one should avoid a direct computation of the normalizing constant; therefore, the loss function of the MLE cannot be used. In this section, we review the Bregman divergence and the generalized NCE, needed to understand the proposed methods.

We summarize frequently used notations. We denote $\mathrm{E}_{*}(\cdot)$ as an expectation under the true density $\eta^{*}(x)$. The notations $\mathrm{Var}_{*}(\cdot)$ and $\tilde{\mathrm{E}}_{*}(\cdot)$ represent variance and empirical analogues. Notation $\mathbb{P}_{n}$ denotes an empirical distribution of $n$ samples from the true distribution $\mathrm{F}_{\eta^{*}}$. We denote $\mathrm{d}\mathbb{P}_n/\mathrm{d}\mu$ as $p_{n}$, evaluation at $\tau$, i.e., $|_{\tau=\tau^{*}}$ as $|_{\tau^{*}}$, and $\nabla_{x}$ as the differentiation with respect to $x$. A summary of the notation is provided in a table in the Appendix \ref{sec:notation}.

\subsection{Bregman Divergence}

Let $\mathbb{R}_{\geq0}$ be a set of non-negative real numbers. We define $\mathcal{F}$ as a collection of non-negative real-valued functions on the sample space $\mathcal{X}$, and assume that $\mathcal{F}$ is a convex set.
Given a convex function $\psi(u)$ on $\mathcal{F}$, the Bregman divergence \citep{BregmanL.M.1967Trmo,GneitingTilmann2007SPSR,DawidAlexander2014Taao} on $\mathcal{F}\times\mathcal{F}$ is defined as $
 B_{\psi}(u,v) = \psi(u)-\psi(v)-\nabla\psi(v)(u-v)$,
where $\nabla\psi(v)$ is a linear operator defined by $
 \lim_{\varepsilon\rightarrow+0}\left[\{\psi(v+\varepsilon h)-\psi(v)\}/\varepsilon \right]=\nabla\psi(v)(h)$. 
Here, $h$ is a function on $\mathcal{X}$ such that
$v+\varepsilon h\in\mathcal{F}$ holds for an arbitrary small $\varepsilon>0$.
The convexity of $\psi(u)$ guarantees the non-negativity of the Bregman divergence. We introduce two kinds of Bregman divergences; one is separable, while the other is non-separable.

The separable Bregman divergence is defined using the function $\psi(u)$:
\begin{align}
\label{eq:potential}
\psi(u)=\mathrm{E}_{*}[f\{u(x)\}],  
\end{align}
where $f:\Rbb_{\geq0}\rightarrow\Rbb$ is a strictly convex function. For the differentiable $f$, the corresponding Bregman divergence $ B_{f}(u,v;\eta^{*})$ between $u$ and $v$ is given as
 \begin{align}
  \label{eqn:separable-Breg}
\mathrm{E}_{*}[f\{u(x)\}-f\{v(x)\}-f'\{v(x)\}\{u(x)-v(x)\}]. 
 \end{align}
 For the strictly convex function $f$, the corresponding $B_{f}(u,v;\eta^{*})$ vanishes if and only if $u=v$ up to a null set with respect to measure $\eta^{*}(x)\mathrm{d}\mu(x)$.

 \begin{example}
For $f(x)=2x\log{x}-2(1+x)\log(1+x)$, the corresponding $B_{f}(u,v;\eta^{*})$ is known as the Jensen-Shannon divergence. In other cases, for $f(x)=x\log{x}$, we have the Kullback-Liber (KL) divergence. For $f(x)=x^{m}/\{m(m-1)\}$, we obtain the $\beta$-divergence  \citep{BasuAyanendranath1998RaEE,MurataNoboru2004IGoU}. 
 \end{example}

The Bregman divergence is non-separable if the convex function $\psi(u)$ is not expressed as \eqref{eq:potential}. The pseudo-spherical divergence and the $\gamma$-divergence are examples of non-separable Bregman divergences, commonly used in robust inference \citep{KanamoriTakafumi2014Aida}. The function $\psi(u)$ of the pseudo-spherical divergence is the $\gamma$-norm with $\gamma>1$ under the density $\eta^{*}(x)$, that is, $\|u\|_{\gamma}=\mathrm{E}_{*}\{u(x)^{1+\gamma}\}^{-\gamma/(1+\gamma)}$. The pseudo-spherical divergence $B_{\text{ps}}(u,v;\eta^{*})$ is defined as follows:
\begin{align}
\label{eq:spherical}
\|u\|_{\gamma}-\frac{1}{\|v\|_{\gamma}^{\gamma-1}}\mathrm{E}_{*}\{ v(x)^{\gamma-1}u(x)\}. 
\end{align}
The pseudo-spherical divergence $B_{\text{ps}}(u,v;\eta^{*})$ vanishes if and only if $u$ and $v$ are linearly dependent. When we apply a log-transformation to each term in  \eqref{eq:spherical}, this becomes a $\gamma$-divergence \citep{FujisawaHironori2008Rpew}, represented as
\begin{align}
 \label{eq:gamma}
 B_{\gamma}(u,v;\eta^{*})&=\frac{1}{\gamma}\log \mathrm{E}_{*}\{u(x)^{\gamma}\}+\frac{\gamma-1}{\gamma}\log \mathrm{E}_{*}\{v(x)^{\gamma}\}-  \\
 &\log \mathrm{E}_{*}\{ v(x)^{\gamma-1}u(x)\} \nonumber.
 \end{align}

\subsection{Generalized Noise Contrastive Estimation}
\label{Generalized NCE}

We review an estimation method for unnormalized models focusing on a generalized NCE \citep{noise2,hirayama}. The strategy to estimate $\theta,c$ in $q(x;\tau)$ is matching a density ratio $q(x;\tau)/a(x)$ with a true densith ratio $q(x;\tau^{*})/a(x)$, where $a(x)$ is a known auxiliary density, by generating samples from the distribution with a density $a(x)$.

Using a set of samples $\{y_{i}\}_{i=1}^{n}$ from the auxiliary distribution with a density $a(y)$ with respect to the baseline measure $\mu$, the estimator $\hat{\tau}_{\text{NC}}$ for $\tau$ is defined as the minimizer of the following function
\begin{align}
\label{eq:loss}
& \frac{1}{n}\sum_{i=1}^{n} r_{q,a}(y_{i};\tau)f'\left\{r_{q,a}(y_{i};\tau)\right\}-  \\ & f\left\{r_{q,a}(y_{i};\tau)\right\}-f'\left\{r_{q,a}(x_{i};\tau)\right\} \nonumber ,
\end{align}
where $r_{q,a}(x;\tau)=q(x;\tau)/a(x)$, $f(x)$ is a strictly convex function, and the support of density $a(x)$ includes the support of $p(x;\theta)$. This estimation is derived from a divergence perspective as follows: let the divergence between the true distribution $\eta^{*}(x)$ and the one-parameter extended model $q(x;\tau)$ be $
B_{f}\left\{r_{\eta^{*},a}(x),r_{q,a}(x); a(x)\right\}$ when $r_{\eta^{*},a}(x)=\eta^{*}(x)/a(x)$. We have $B_{f}\left\{r_{\eta^{*},a}(x),r_{q,a}(x); a(x)\right\}\geq 0$ and $B_{f}\left\{r_{\eta^{*},a}(x),r_{q,a}(x); a(x)\right\}=0 \Leftrightarrow \eta^{*}(x)=q(x;\tau)$. Therefore, the estimation problem of $\tau$ is reduced to a minimization problem of $B_{f}\left\{r_{\eta^{*},a}(x),r_{q,a}(x); a(x)\right\}$ with respect to $\tau$. By subtracting the term not associated with $q(x;\tau)$ from $B_{f}\left\{r_{\eta^{*},a}(x),r_{q,a}(x); a(x)\right\}$, we obtain the term:
\begin{align*}
&-\int f'\left\{r_{q,a}(x)\right\}\eta^{*}(x)\mathrm{d}\mu(x)+ \\
&\int \left[f'\left\{r_{q,a}(x)\right\}r_{q,a}(x)-f\left\{r_{q,a}(x)\right\}\right]a(x)\mathrm{d}\mu(x).  
\end{align*}
The loss function of $\hat{\tau}_{\text{NC}}$, \eqref{eq:loss}, is constructed using an empirical approximation of this term. 

Unless otherwise noted, we hereafter assume the following properties for $f(x)$:
\begin{assumption}
\label{as:1}
Function $f: \Rbb_{+} \to \Rbb$  satisfies the following three properties: strictly convex, third-order differentiable and $f''(1)=1$. 
\end{assumption}
Among $f(x)$ satisfying the abovementioned conditions, the estimator when $f(x)=2x\log x-2(1+x)\log(1+x)$ is proven to be optimal from the perspective of asymptotic variance, irrespective of the auxiliary distribution \citep{Uehara}. In this case, the loss function of the estimator becomes:
\begin{align}
\label{eq:noise2}
 -\frac{1}{n}\sum_{i=1}^{n} \log \frac{r_{q,a}(x_{i};\tau)}{1+r_{q,a}(x_{i};\tau)}-\frac{1}{n}\sum _{i=1}^{n} \log \frac{1}{1+r_{q,a}(y_{i};\tau)}.
\end{align}
This loss function is identical to the original NCE \citep{noise}. Although it satisfies some aforementioned optimality, the asymptotic variance of the estimator derived from the above loss function is larger than that of the MLE. We can also use another type of $f(x)$. For example, when $f(x)=x\log x$, ths loss function is 
\begin{align}
\label{eq:noise_mle}
 -\frac{1}{n}\sum_{i=1}^{n} \log r_{q,a}(x_i;\tau)+\frac{1}{n}\sum _{i=1}^{n} r_{q,a}(y_{i};\tau).
\end{align}
This is reduced to the same from as the one of Monte Carlo MLE \citep{GeyerC1994Otco}:
\begin{align}
 \label{eq:noise_profile}
 -\frac{1}{n}\sum_{i=1}^{n} \log p(x_i;\theta)+\log \left \{\frac{1}{n}\sum _{i=1}^{n} \frac{p(y_{i};\theta)}{a(y_i)} \right \} 
\end{align}
by profiling out $c$ beforehand. The asymptotic variance of the Monte Carlo MLE is lager than that of the NCE \citep{Riou-DurandLionel2018NceA}. 

We have so far assumed that the sample size of the auxiliary distribution goes to infinity at the same rate as the size of the true distribution when considering an asymptotic regime. In this case, generalized NCE is not statistically efficient. In contrast, when the sample size of the auxiliary density grows faster than the sample size of the true distribution, it is easily proved that the Monte Carlo MLE is statistically efficient. However, this asymptotic regime is suggesting that the evaluation cost of the objective function is larger than $\mathcal{O}(n)$, which is the order when MLE can be done exactly. This is problematic because it requires much computational time. Throughout this paper, our goal is to find an efficient estimator such that the evaluation cost of the objective function is $\mathcal{O}(n)$. 

\section{ESTIMATION WITH SELF DENSITY-RATIO MATCHING}
\label{sec:esti}

We propose two types of statistically efficient estimators with a reasonable computational time. Our key idea is to match the ratio of the unnormalized model and nonparametrically estimated density using Bregman divergence. We introduce an estimator based on a separable Bregman divergence. Then, we introduce an estimator based on a non-separable Bregman divergence. 

\subsection{Separable Case}

We introduce an estimator, called self density-ratio matching estimator (SDRME) for $\tau$ as a form of M-estimators:
\begin{align}
\label{eq:breg-separable}
 \hat{\tau}_{\text{s}}=\argmin_{\tau\in \Theta_{\tau}} \mathrm{B}_{f}[h_{1}\{w(x;\tau)\},h_{2}\{w(x;\tau)\};p_{n}],
\end{align}
where $\Theta_{\tau}$ is a parameter space for $\tau$, $w(x)=q(x;\tau)/\hat{\eta}_{n}(x)$, $\hat{\eta}_{n}(x)$ is the nonparametric estimator using an entire set of samples, $q(x;\tau)$ is a one-parameter extended model in Section \ref{Generalized NCE}, $p_{n}=\mathrm{d}\mathbb{P}_{n}/\mathrm{d}\mu$, and $h_{1}(x)$ and $h_{2}(x)$ are functions satisfying the conditions mentioned in the next paragraph. We introduce $h_1,h_2$ to generalize the result as much as possible. More specifically, the loss function is written as
\begin{align}
\label{eq:ob-separable}
\frac{1}{n}\sum_{i=1}^{n} B_{f}\{h_{1}(w_i),h_2(w_i)\},  
\end{align}
where $w_{i}=q(x_{i};\tau)/\hat{\eta}_{n}(x_{i})$. Importantly, it requires only sample order $\mathcal{O}(n)$ calculation. 

When the baseline measure is a counting measure, we use an empirical distribution $p_{n}(x)$ as $\hat{\eta}_{n}(x)$, whereas when the baseline measure is a Lebesgue measure, we use a kernel density estimator as $\hat{\eta}_{n}(x)$. Three conditions for $h_{1}(x), h_{2}(x)$ are assumed herein.  
\begin{assumption}\label{asm:asm}
Functions $h_{1}:\Rbb_{+} \to \Rbb$ and $h_{2}:\Rbb_{+} \to \Rbb$ must be (\ref{asm:asm}\RN{1}) monotonically second-order differentiable increasing functions,  (\ref{asm:asm}\RN{2}) $h_{1}(x)=h_{2}(x)\iff x=1$, and (\ref{asm:asm}\RN{3}) $h'_{1}(1)\neq h'_{2}(1)$.
\end{assumption}
Condition (\ref{asm:asm}\RN{2}) is required for the identification, and (\ref{asm:asm}\RN{3}) is needed to state the asymptotic normality of the estimators. 

This estimator works based on the following intuitive equivalence. By replacing $p_{n}(x)$ and $\hat{\eta}_{n}(x)$ with $\eta^{*}(x)$ in \eqref{eq:breg-separable}, we obtain $
\mathrm{B}_{f}\{h_{1}(w),h_{2}(w);\eta^{*}\}=0  \iff  h_{1}(w)=h_{2}(w) 
 \iff w=1 \iff q(x;\tau)=\eta^{*}(x)$. As explained in Section \ref{investigation}, this estimator is rigorously proven to be consistent and efficient. Several specific choices can be considered as $h_{1}(w)$ and $h_{2}(w)$ as in Example \ref{ex:gene}. We explain the SDRME with a separable divergence in Figure \ref{fig:intuitive} and Algorithm \ref{alg:SDRME}.

\begin{figure}
    \centering
    \includegraphics[width=6cm]{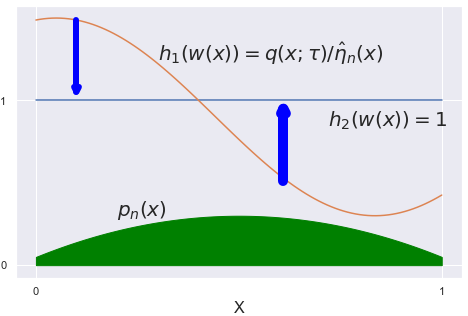}
    \vspace{-0.4cm}
    \caption{Graphical explanation of the SDRME with $h_1(w)=w,\,h_{2}(w)=1$. The objective function is measuring the difference between $q(x;\tau)/\hat{\eta}_n(x)$ and $q(x;\tau^{*})/\eta^{*}(x)=1$ utilizing a Bregman divergence with density $p_{n}(x)$. }
    \label{fig:intuitive}
\end{figure}

\begin{table}[!]
    \centering
        \caption{Comparison between the generalized NCE and the SDRME in Example \ref{ex:gene}. Both methods are seen as matching a ratio with 
    $\tau$ (ratio \RN{1}) to a target ratio (ratio \RN{2}). Here, efficiency means statistical efficiency.}
    \begin{tabular}{ccc} \\
     & Generalized NCE & SDRME in Eg 3.1 \\ \hline
    Ratio \RN{1} & $q(x;\tau)/a(x)$ & $q(x;\tau)/\hat{\eta}_{n}(x)$ \\ 
    Ratio \RN{2} & $q(x;\tau^{*})/a(x)$ & $q(x;\tau^{*})/\eta^{*}(x)=1$ \\
    Evaluation cost & $\mathcal{O}(n)$ & $\mathcal{O}(n)$ \\
    Efficiency  & No & Yes \\ 
    \end{tabular}
    \vspace{-0.2cm}
    \label{tab:comparison}
\end{table}

\begin{example}[An extension of the generalized NCE ]
\label{ex:gene}
Consider a case where $h_{1}(w)=w$ and $h_{2}(w)=1$. We practically recommend this choice as $h_{1}(w)$ and $h_{2}(w)$. The loss function becomes 
\begin{align*}
\frac{1}{n}\sum_{i=1}^{n}
 \big\{-f'(w_i)  + w_i f'(w_i) -f(w_i) \big\}.
\end{align*}
This is considered to be a natural extension of the generalized NCE as in Table \ref{tab:comparison} because the loss function \eqref{eq:loss} is the same as the one above when we replace $a(x)$ with $\hat{\eta}_{n}(x)$, and $y_{i}$ with $x_{i}$ in \eqref{eq:loss}. Especially, when $f(x)=x\log x$, the loss functions corresponding \eqref{eq:noise_mle} and \eqref{eq:noise_profile} are  
\begin{align}
  & -\frac{1}{n}\sum_{i=1}^{n}\log q(x_{i};\tau)+ {\frac{1}{n}\sum_{i=1}^{n}\frac{q(x_{i};\tau)}{\hat{\eta}_{n}(x_{i})}}, \label{eq:sdrme_mle}\\
& -\frac{1}{n}\sum_{i=1}^{n}\log p(x_{i};\theta)+\log\left \{ {\frac{1}{n}\sum_{i=1}^{n}\frac{p(x_{i};\theta)}{\hat{\eta}_{n}(x_{i})}}\right \}. \nonumber
\end{align}

\end{example}

\subsection{Non-separable Case}
\label{sec:nonsep}

Similar to the separable Bregman divergence case, the pseudo-spherical divergence $B_{\text{ps}}$ and the $\gamma$-divergence $B_{\gamma}$ also provide statistically efficient estimators for unnormalized models. Following the analogy of the separable case when $h_{1}(w)=w^{\alpha},\,h_{2}(w)=w^{\beta}\,(
\alpha \neq \beta)$, suppose that $B_{\text{ps}}(w^\alpha, w^{\beta};\eta^{*})=0$ holds. Then, $w^\alpha$ should be proportional to $w^{\beta}$ because of the property of the pseudo-spherical divergence. In other words, $w(x)$ is a constant function.
When $w(x;\theta)$ is $p(x;\theta)/\eta(x)$ and $\eta(x)$ is close to  $\eta^{*}(x)$,
$p(x;\theta)$ should be close to $\eta^{*}(x)$ up to the constant factor.
This implies that the parameter $\theta$ can be estimated using the pseudo-spherical divergence. 
Replacing $\hat{\eta}_n(x)$ with a nonparametric estimator, the SDRME with the non-separable divergence $\hat{\theta}_{\text{ns-ps}}$ is obtained as 
\begin{align*}
 \argmin_{\theta \in \Theta} B_{\text{ps}}(w(x;\theta)^{\alpha/\gamma}, w(x;\theta)^{\beta/\gamma}; p_{n}),
\end{align*}
and  $w(x;\theta)={p(x;\theta)}/{\hat{\eta}_{n}(x)}$, under the condition $\alpha\neq \beta$. Then, the loss function is 
\begin{align*}
\left( \sum_{i=1}^{n} w^{\alpha}_{i}\right)^{\frac{1}{\gamma}}- \left(\sum_{i=1}^{n}w^{\beta}_{i}\right)^{(1-\gamma)/\gamma} \sum_{i=1}^{n}w^{\delta}_{i},    
\end{align*}
where $\delta=(\alpha+\beta(\gamma-1))/\gamma, w_{i}=p(x_{i};\theta)/\hat{\eta}_{n}(x_{i})$. 
By taking a logarithm of each term as in \eqref{eq:gamma}, we can construct a loss function corresponding to $\gamma$-divergence. This is equal to 
$B_{\gamma}(w^{\alpha},w^{\beta};p_{n})$:
\begin{align}
\label{eq:tk}
\frac{1}{\gamma}\log \sum_{i=1}^{n} w^{\alpha}_{i}+\frac{\gamma-1}{\gamma}\log \sum_{i=1}^{n}w^{\beta}_{i}-\log \sum_{i=1}^{n}w^{\delta}_{i}.
\end{align}
We define estimator $\hat{\theta}_{\text{ns-}\gamma}$ as a minimizer of the above function with respect to $\theta$ over $\Theta$. We explain the SDRME with a non-separable divergence in Algorithm \ref{alg:SDRME2}. 

Two things should be noted. First, compared with the case of the separable divergence, the unnormalized model $p(x;\theta)$ is directly used instead of a one-parameter extended model $q(x;\tau)=\exp(-c)p(x;\theta)$. This is due to the scale-invariance property of the pseudo-spherical divergence; $B_{\gamma}(u,v;p_n)=B_{\gamma}(u,\kappa v;p_n)$ for constant $\kappa$ \citep{KanamoriTakafumi2014Aida,KanamoriTakafumi2015Reuh}. 
Second, when the baseline measure is a counting measure, \cite{takekana2017} proposed an estimator defined as a minimizer of the following function with respect to $\theta$,
\begin{align*}
&\frac{1}{\gamma}\log \sum_{x\in \mathcal{X}} c_x^{1-\alpha}p(x;\theta)^{\alpha}+\frac{\gamma-1}{\gamma} \log \sum_{x\in \mathcal{X} }c_x^{1-\beta}p(x;\theta)^{\beta} \\
&-\log \sum_{x\in \mathcal{X}}c_x^{1-\delta}p(x;\theta)^{\delta},
\end{align*}
where $c_{x}=n_x/n$, and $n_{x}$ is a sample number taking the value of $x$. This loss function is essentially the same as \eqref{eq:tk} by modifying the form of summing. The case was only considered when the sample space is discrete. However, it can be generalized to the case where the sample space is continuous, using our new unified perspective. For simplicity, hereafter, we assume $\delta=0$ to eliminate the third term in \eqref{eq:tk}. This restriction is also reasonable to obtain the convexity as seen in Appendix \ref{sec:convexity}.

\begin{algorithm}
\label{alg:SDRME}
\SetKwInOut{Input}{input}
\SetKwInOut{Output}{output}
\Input{ Data $\{x_i\}_{i=1}^{n}$ and model $p(x;\theta)$}
\Output{$\hat \tau$}
Set $h_{1}(x),h_{2}(x),f(x)$ (Default $h_{1}(x)=x,\,h_{2}(x)=1,f(x)=x\log x$)\\
Make a nonparametric estimator $\hat \eta_n(x)$ from $\{x_i\}_{i=1}^{n}$ \\
Define $w_i=q(x_i;\tau)/\hat \eta_n(x_i)$ \\
Minimize \eqref{eq:ob-separable} with respect to $\tau$ 
\caption{SDRME with separable divergence}
\end{algorithm}
\vspace{-0.2cm}

\begin{algorithm}
\label{alg:SDRME2}
\SetKwInOut{Input}{input}
\SetKwInOut{Output}{output}
\Input{ Data $\{x_i\}_{i=1}^{n}$ and model $p(x;\theta)$}
\Output{$\hat \theta$}
Set $\alpha,\beta,\gamma$ (Default $\alpha=-0.01,\beta=0.99,\gamma=1.01$) \\
Make a nonparametric estimator $\hat \eta_n(x)$ from $\{x_i\}_{i=1}^{n}$ \\
Define $w_i=p(x_i;\theta)/\hat \eta_n(x_i)$ \\
Minimize \eqref{eq:tk} with respect to $\theta$
\caption{SDRME with non-separable divergence}
\end{algorithm}
\vspace{-0.2cm}
\section{PROPERTIES OF SDRME}
\label{investigation}

We prove that the asymptotic variance of estimators $\hat{\theta}_{\text{s}}$ and $\hat{\theta}_{\text{ns-}\gamma}$ is identical to that of the MLE. We utilize the property in which our estimators take the form of the Z-estimators with infinite dimensional nuisance parameters \citep{VaartA.W.vander1998As,BolthausenErwin2002LoPT}. For the proofs, refer to Appendix.  

\subsection{Efficiency in the Separable Case}
\label{separablecase}

First, we discuss the case when the divergence is separable. The estimator $\hat{\tau}_{\text{s}}$ based on the separable divergence is defined as the minimizer of the following function $
n^{-1}\sum_{i=1}^{n} B_f\{h_1(w_i),h_2(w_i)\}$, where $w_{i}=q(x_{i};\tau)/\hat{\eta}_{n}(x_{i})$ and $\hat{\eta}_{n}(x)$ is a nonparametric density estimator using an entire sample. 

If $\hat{\eta}_{n}(x)$ was equal to $\eta^{*}(x)$, this estimator $\hat{\tau}_{\text{s}}$ would be regarded as the solution to 
$\tilde{\mathrm{E}}_{*}[\phi(x;\tau,\eta^{*})]=0$, where $\phi(x;\tau,\eta)$ is 
\begin{align*}
    & f[h_{1}\{w(x)\}]-f[h_{2}\{w(x)\}]- \\
    &f'[h_{2}\{w(x)\}]\left [h_{1}\{w(x)\}-h_{2}\{w(x)\}\right ],
\end{align*}
and $w(x)=q(x;\tau)/\eta(x)$, by differentiating the loss function with respect to $\tau$. Here, the moment condition $\mathrm{E}_{*}\{\phi(x;\tau,\eta^{*})|_{\tau^{*}}\}=0$ holds. This condition guarantees that the estimator converges to $\tau^{*}$. However, this includes the unknown term $\eta^{*}(x)$. By replacing $\eta^{*}(x)$ with the nonparametric estimator $\hat{\eta}_{n}(x)$, the estimator $\hat{\tau}_{\text{s}}$ is still regarded as a Z-estimator. In fact, the estimator $\hat{\tau}_{\text{s}}$ satisfies the equation $\tilde{\mathrm{E}}_{*}\{\phi(x;\tau,\hat{\eta}_{n})\}=0 $. The consistency holds as follows when the sample space is discrete. Before that, we assume the following conditions throughout this paper:
\begin{assumption}
The model is $q(x;\tau)$ is $C^{2}$-function with respect to $\tau$. The parameter space $\Theta_\tau$ is compact and $\tau^{*}$ is in the interior of  $\Theta_\tau$. The equation $q(x;\tau)=\eta(x)$ holds if and only if $\tau=\tau^{*}$. 
\end{assumption}
All of the conditions are common conditions used in MLE \citep{VaartA.W.vander1998As}. 

\begin{theorem}[Consistency in discrete space] \label{thm:discrete_con}
$\hat{\tau}_{\text{s}}\stackrel{p}{\rightarrow}\tau^{*}$ .
\end{theorem}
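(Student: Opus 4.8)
The plan is to treat $\hat{\tau}_{\text{s}}$ as a standard $M$-estimator and invoke the classical argmin (M-estimator) consistency theorem \citep{VaartA.W.vander1998As}. Define the empirical and population criteria
\[
 M_{n}(\tau)=\frac{1}{n}\sum_{i=1}^{n} B_{f}\{h_{1}(w_{i}),h_{2}(w_{i})\},\quad w_{i}=q(x_{i};\tau)/p_{n}(x_{i}),
\]
\[
 M(\tau)=B_{f}\{h_{1}(w),h_{2}(w);\eta^{*}\},\quad w=q(x;\tau)/\eta^{*}(x).
\]
In the discrete case $p_{n}(x)=c_{x}:=n_{x}/n$, so grouping the $n$ observations by their value turns $M_{n}$ into the finite sum $M_{n}(\tau)=\sum_{x:\,n_{x}>0}c_{x}\,B_{f}\{h_{1}(q(x;\tau)/c_{x}),h_{2}(q(x;\tau)/c_{x})\}$ with strictly positive weights, so no division by zero occurs. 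Since $\hat{\tau}_{\text{s}}$ is by definition an exact minimizer, the near-minimality requirement $M_{n}(\hat{\tau}_{\text{s}})\le \inf_{\tau}M_{n}(\tau)$ holds trivially. It therefore remains to verify (a) that $\tau^{*}$ is a well-separated minimizer of $M$, and (b) that $\sup_{\tau\in\Theta_{\tau}}|M_{n}(\tau)-M(\tau)|\stackrel{p}{\rightarrow}0$.

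For (a), recall from \eqref{eqn:separable-Breg} that $M(\tau)=\mathrm{E}_{*}[f\{h_{1}(w)\}-f\{h_{2}(w)\}-f'\{h_{2}(w)\}\{h_{1}(w)-h_{2}(w)\}]$ is a separable Bregman divergence, hence nonnegative and vanishing exactly when $h_{1}(w)=h_{2}(w)$ $\eta^{*}$-almost everywhere. By the identification condition in Assumption~\ref{asm:asm} this is equivalent to $w=1$ a.e., i.e. $q(x;\tau)=\eta^{*}(x)$, which by the model identifiability assumption forces $\tau=\tau^{*}$. Thus $M(\tau)>0=M(\tau^{*})$ for every $\tau\neq\tau^{*}$. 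Continuity of $\tau\mapsto M(\tau)$ follows from the $C^{2}$-dependence of $q(\cdot;\tau)$ on $\tau$, the smoothness of $f$ (Assumption~\ref{as:1}) and of $h_{1},h_{2}$, and the finiteness of the support; together with compactness of $\Theta_{\tau}$ this gives $\inf_{\|\tau-\tau^{*}\|\ge\epsilon}M(\tau)>0$ for every $\epsilon>0$, i.e. a well-separated minimum.

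The main obstacle is (b), and it is caused entirely by the random denominators $c_{x}=p_{n}(x)$ sitting inside the nonlinear maps $h_{1},h_{2}$. I would argue as follows: by the law of large numbers $c_{x}\stackrel{p}{\rightarrow}\eta^{*}(x)$ for each state $x$, and on the event --- of probability tending to one --- that every state with $\eta^{*}(x)>0$ has been observed, the vector $(c_{x})_{x}$ lies in a compact neighbourhood of $(\eta^{*}(x))_{x}$ on which all coordinates are bounded away from $0$. On the compact product of $\Theta_{\tau}$ with this neighbourhood the map $(\tau,(c_{x})_{x})\mapsto \sum_{x}c_{x}B_{f}\{h_{1}(q(x;\tau)/c_{x}),h_{2}(q(x;\tau)/c_{x})\}$ is jointly continuous, so uniform convergence in $\tau$ follows from continuity plus compactness (equivalently, a uniform law of large numbers over the finite support). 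This yields $\sup_{\tau}|M_{n}(\tau)-M(\tau)|\stackrel{p}{\rightarrow}0$.

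With (a) and (b) established, the argmin consistency theorem gives $\hat{\tau}_{\text{s}}\stackrel{p}{\rightarrow}\tau^{*}$. The step requiring the most care is keeping the denominators $c_{x}$ uniformly bounded away from zero: this is immediate for a finite sample space, but for a countably infinite discrete space one would additionally need a tail condition ensuring that the contribution of rarely observed states is uniformly negligible, and that is where I expect the genuine work to lie.
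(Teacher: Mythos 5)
Your proof is correct for a finite discrete sample space, but it takes a genuinely different route from the paper. The paper does not work with the loss directly: it differentiates the loss, regards $\hat{\tau}_{\text{s}}$ as a Z-estimator solving $\tilde{\mathrm{E}}_{*}\{\phi(x;\tau,\hat{\eta}_{n})\}=0$, and invokes the Z-estimator consistency theorem (Theorem 5.11 in \cite{BolthausenErwin2002LoPT}), checking (1a) that the estimating functions form a Glivenko--Cantelli class (inherited from the Donsker property verified in the proof of Theorem~\ref{thm:4.2} via Lipschitz composition) and (1b) that $\tau^{*}$ is a well-separated zero of $\tau\mapsto\mathrm{E}_{*}\{\phi(x;\tau,\eta^{*})\}$, with the identification of that zero cited from \cite{Uehara}. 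You instead apply the argmin (M-estimator) consistency theorem to the criterion itself, verifying a well-separated minimum and uniform convergence of $M_{n}$ to $M$. Your route buys two things: identification falls out of the Bregman structure for free (nonnegativity, strict convexity of $f$, and Assumption~\ref{asm:asm}(\RN{2}) give $M(\tau)=0\iff\tau=\tau^{*}$, with no appeal to properties of the score equation), and your uniform law of large numbers is elementary --- finite-dimensional convergence of the empirical frequencies $(c_{x})_{x}$ plus joint continuity on a compact set, with no empirical process theory. What it costs is generality: as you yourself flag, the joint-continuity-plus-compactness argument hinges on the support being finite and on the $c_{x}$ staying bounded away from zero, so a countably infinite discrete space would need an additional tail/truncation argument. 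The paper's Glivenko--Cantelli route is nominally insensitive to the cardinality of the support, though its Lipschitz-composition step is itself delicate when the denominators $\hat{\eta}_{n}(x)$ can be arbitrarily small, so neither argument fully settles the infinite-support case; within the finite-support setting (which covers all discrete examples in the paper, such as the RBM and the submodular model), your proof is complete and arguably more self-contained than the paper's.
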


We next show the asymptotic normality of the estimator $\hat{\tau}_{\text{s}}$ when the sample space is discrete. 

\begin{theorem}[Asymptotic normality in discrete space ]
\label{thm:4.2}
When the sample space is discrete,  assume that (2a) the following matrix $\Omega = \mathrm{E}_{*}(\nabla_{\tau}\log q\nabla_{\tau^{\top}}\log q|_{\theta^{*}})$ 
is non-singular, and (2b) the second order derivative of the map $\eta \to\phi(x;\tau,\eta)$ is uniformly bounded around in a neighborhood of $\eta^{*}$. We then have:
\begin{align*}
&\sqrt{n}(\hat{\tau}_{\text{s}}-\tau^{*})=\Omega^{-1}\mathbb{G}_{n}\left\{\nabla_{\tau}\log q(x;\tau)|_{\tau^{*}}\right\} +\mathrm{o}_{p}(1),\, \\
&\sqrt{n}(\hat{\tau}_{\text{s}}-\tau^{*})\stackrel{d}{\rightarrow}\mathcal{N}(0,\Omega^{-1}). 
\end{align*}
\end{theorem}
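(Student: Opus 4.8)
The plan is to treat $\hat{\tau}_{\text{s}}$ as a Z-estimator with an infinite-dimensional nuisance parameter $\eta$ that is plugged in by the empirical density $\hat{\eta}_{n}=p_{n}$, and to derive an asymptotically linear representation from which the CLT yields the limit law. Writing $\psi(x;\tau,\eta)=\nabla_{\tau}\phi(x;\tau,\eta)$ for the estimating function, so that $\hat{\tau}_{\text{s}}$ solves $\tilde{\mathrm{E}}_{*}\{\psi(x;\hat{\tau}_{\text{s}},\hat{\eta}_{n})\}=0$, a direct differentiation gives $\psi(x;\tau,\eta)=A(w)\,w\,\nabla_{\tau}\log q(x;\tau)$ with $w=q(x;\tau)/\eta(x)$ and $A(w)=h_{1}'(w)\{f'(h_{1}(w))-f'(h_{2}(w))\}-f''(h_{2}(w))h_{2}'(w)\{h_{1}(w)-h_{2}(w)\}$. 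The identification and derivative conditions of Assumption~\ref{asm:asm}, together with Assumption~\ref{as:1}, supply the two structural facts $A(1)=0$ and $A'(1)=f''(h_{1}(1))\{h_{1}'(1)-h_{2}'(1)\}^{2}\neq0$. The vanishing $A(1)=0$ is crucial: it makes $\psi(\cdot;\tau^{*},\eta^{*})\equiv0$ pointwise, so the estimating function carries no bias at the truth.

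Using the consistency just established, I would Taylor-expand the estimating equation about $\tau^{*}$ to obtain $\sqrt{n}(\hat{\tau}_{\text{s}}-\tau^{*})=-\{\dot{\Psi}_{n}(\bar{\tau})\}^{-1}\sqrt{n}\,\tilde{\mathrm{E}}_{*}\{\psi(x;\tau^{*},\hat{\eta}_{n})\}$ for an intermediate point $\bar{\tau}$. For the Jacobian factor, a uniform law of large numbers combined with $\hat{\tau}_{\text{s}}\to\tau^{*}$, $\hat{\eta}_{n}\to\eta^{*}$ and the boundedness in (2b) yields $\dot{\Psi}_{n}(\bar{\tau})\to\mathrm{E}_{*}\{\nabla_{\tau}\psi|_{\tau^{*},\eta^{*}}\}=A'(1)\,\Omega$, which is invertible by (2a) and $A'(1)\neq0$.

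The core term is $\sqrt{n}\,\tilde{\mathrm{E}}_{*}\{\psi(x;\tau^{*},\hat{\eta}_{n})\}$, and this step I expect to be the main obstacle. Since $\psi(\cdot;\tau^{*},\eta^{*})\equiv0$, it equals $\sqrt{n}\,\tilde{\mathrm{E}}_{*}\{\psi(x;\tau^{*},\hat{\eta}_{n})-\psi(x;\tau^{*},\eta^{*})\}$, which I would expand to first order in $\hat{\eta}_{n}-\eta^{*}$ via the Gateaux derivative $D_{\eta}\psi(\cdot;\tau^{*},\eta^{*})[\xi]=-A'(1)\,\{\xi(x)/\eta^{*}(x)\}\,\nabla_{\tau}\log q(x;\tau^{*})$; the quadratic remainder is controlled by (2b) and the root-$n$ rate $\|\hat{\eta}_{n}-\eta^{*}\|_{\infty}=\mathrm{O}_{p}(n^{-1/2})$ on the discrete support, hence is $\mathrm{o}_{p}(1)$ after scaling. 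Substituting $\hat{\eta}_{n}=p_{n}$ and writing $\tilde{\mathrm{E}}_{*}\{p_{n}(p_{n}-\eta^{*})g/\eta^{*}\}=\sum_{x}(p_{n}-\eta^{*})g+\sum_{x}\{(p_{n}-\eta^{*})^{2}/\eta^{*}\}g$ with $g=\nabla_{\tau}\log q|_{\tau^{*}}$, the first piece is exactly $(\mathbb{P}_{n}-\mathrm{F}_{\eta^{*}})\{g\}$ while the second is $\mathrm{O}_{p}(n^{-1})$, so the whole term collapses to $-A'(1)\,\mathbb{G}_{n}\{\nabla_{\tau}\log q|_{\tau^{*}}\}+\mathrm{o}_{p}(1)$.

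Combining the two factors, the constants $A'(1)$ cancel and give $\sqrt{n}(\hat{\tau}_{\text{s}}-\tau^{*})=\Omega^{-1}\mathbb{G}_{n}\{\nabla_{\tau}\log q|_{\tau^{*}}\}+\mathrm{o}_{p}(1)$, a representation that is notably independent of the particular choice of $f,h_{1},h_{2}$. The Lindeberg--L\'evy CLT applied to the empirical process of the fixed square-integrable map $\nabla_{\tau}\log q|_{\tau^{*}}$ (finite second moment by (2a)) then delivers the stated Gaussian limit with asymptotic variance $\Omega^{-1}$. The delicate point throughout is the plug-in step: confirming that inserting the nonparametric $p_{n}$ reproduces precisely the score empirical process and does not inflate the asymptotic variance, which is exactly where condition (2b) and the discreteness of the sample space are needed.
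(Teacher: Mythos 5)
Your proposal is correct and reproduces the theorem, but it justifies the two linearizations by a genuinely different, more elementary route than the paper. The skeleton is shared: your estimating function $A(w)\,w\,\nabla_{\tau}\log q$ is exactly the paper's $\phi(x;\tau,\eta)$, and you use the same two structural facts, $A(1)=0$ and $A'(1)=f''(h_1(1))\{h_1'(1)-h_2'(1)\}^{2}\neq 0$, with the same cancellation of $A'(1)$ between the Jacobian and the drift term. The difference is in the machinery. The paper invokes the semiparametric Z-estimator master theorem (Theorem 6.17 in the Bolthausen volume), which forces it to verify that $\{\phi(x;\tau,\eta)\}$ is a Donsker class (via Lipschitz preservation) and $L^{2}$-continuous at $(\tau^{*},\eta^{*})$; that theorem hands back a decomposition into the drift term $\sqrt{n}\,\mathrm{E}_{*}\{\phi|_{\tau^{*},\hat{\eta}_{n}}\}$ plus the empirical-process term $\mathbb{G}_{n}\phi|_{\tau^{*},\eta^{*}}$, the latter killed by stochastic equicontinuity together with $\phi|_{\tau^{*},\eta^{*}}\equiv 0$. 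You instead use a classical mean-value Taylor expansion in $\tau$ and then expand in $\eta$ \emph{under the empirical expectation}; because $\hat{\eta}_{n}=p_{n}$ in the discrete case, the piece that the paper controls abstractly by equicontinuity shows up in your computation explicitly as $\sum_{x}\{(p_{n}-\eta^{*})^{2}/\eta^{*}\}g=\mathrm{O}_{p}(n^{-1})$, so no empirical-process theory is needed at all. What each approach buys: yours is self-contained and elementary, but it is tied to the discrete setting, since it leans on the $n^{-1/2}$ rate of $p_{n}$ and on $\tilde{\mathrm{E}}_{*}$ being a finite sum against $p_{n}$; it would not transfer to Theorem \ref{thm:4.4}, where $\hat{\eta}_{n}$ is a kernel estimator with a slower-than-$\sqrt{n}$ rate and the Donsker/Newey apparatus the paper sets up becomes essential. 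The paper's heavier route pays off precisely by making the continuous case a variation on the same proof.

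Two caveats, neither fatal. First, your Jacobian step $\dot{\Psi}_{n}(\bar{\tau})\to A'(1)\Omega$ needs a uniform law of large numbers with some domination in a joint neighborhood of $(\tau^{*},\eta^{*})$; you assert it at the same level of rigor as the paper, which is acceptable but should be stated as an assumption. Second, your final CLT step (and the paper's) tacitly uses $\mathrm{Var}_{*}\{\nabla_{\tau}\log q|_{\tau^{*}}\}=\Omega$, which is false: the $c$-component of $\nabla_{\tau}\log q$ is identically $-1$, so the score is not mean-zero. Writing $e=\mathrm{E}_{*}\{\nabla_{\tau}\log q|_{\tau^{*}}\}$ and noting $\Omega^{-1}e$ equals minus the $c$-coordinate unit vector, the linear representation actually yields asymptotic variance $\Omega^{-1}-\Omega^{-1}ee^{\top}\Omega^{-1}$, which differs from $\Omega^{-1}$ only in the $(c,c)$ entry. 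This imprecision is inherited from the paper's own statement, and it does not affect the $\theta$-block used in Corollary \ref{cor:score}, but your write-up should not present the variance claim as a direct consequence of the Lindeberg--L\'evy CLT without this qualification.
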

These assumptions originate from \citet[Theorem 6.18.]{BolthausenErwin2002LoPT}. Assumption (2b) is required to control the remainder term in the proof. It is commonly used to state an asymptotic normality in MLE \citep{VaartA.W.vander1998As}. 

The variance estimator for $\hat{\tau}_{\text{s}}$ is easily constructed from Theorem \ref{thm:4.2}. Finally, we prove that $\hat{\theta}_{\text{s}}$ in $\hat{\tau}_{\text{s}}=(\hat{c}_{s},\hat{\theta}_{\text{s}})$ is equivalent to MLE in terms of the asymptotic variance. 

\begin{corollary}
\label{cor:score}
When the sample space is discrete, we have 
\begin{align*}
    \sqrt{n}(\hat{\theta}_{\text{s}}-\theta^{*})\stackrel{d}{\rightarrow} \mathcal{N}(0,\mathfrak{I}_{\theta^{*}}^{-1}),
\end{align*}
where $\mathfrak{I}_{\theta^{*}}$ is the Fisher information matrix at $\theta^{*}$ of the normalized model, that is, $\mathrm{Var}_{*}\{S(x;\theta^{*})\}$, where
$S(x;\theta)= \nabla_{\theta}\left\{\log p(x;\theta)-\log \int p(x;\theta)\mathrm{d}\mu(x)\right\}$. 
\end{corollary}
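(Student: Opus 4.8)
The plan is to read off the limit law of $\hat{\theta}_{\text{s}}$ directly from Theorem \ref{thm:4.2} and then identify the relevant block of $\Omega^{-1}$ with $\mathfrak{I}_{\theta^{*}}^{-1}$ by a Schur-complement computation. By Theorem \ref{thm:4.2}, $\sqrt{n}(\hat{\tau}_{\text{s}}-\tau^{*})\stackrel{d}{\rightarrow}\mathcal{N}(0,\Omega^{-1})$, so marginalizing the Gaussian to the $\theta$-coordinates shows that $\sqrt{n}(\hat{\theta}_{\text{s}}-\theta^{*})$ is asymptotically normal with covariance equal to the lower-right $(\dim\theta)\times(\dim\theta)$ submatrix of $\Omega^{-1}$. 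It therefore suffices to prove that this submatrix equals $\mathfrak{I}_{\theta^{*}}^{-1}$.

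First I would exploit the additive structure of the extended model. Since $\log q(x;\tau)=-c+\log p(x;\theta)$, the score splits as $\nabla_{\tau}\log q|_{\tau^{*}}=(-1,\,\nabla_{\theta^{\top}}\log p(x;\theta^{*}))^{\top}$. Writing $s(x)\equiv\nabla_{\theta}\log p(x;\theta^{*})$ as shorthand, the matrix $\Omega=\mathrm{E}_{*}(\nabla_{\tau}\log q\,\nabla_{\tau^{\top}}\log q|_{\theta^{*}})$ takes the block form whose $c$-block is $1$, whose off-diagonal block is $-\mathrm{E}_{*}[s]$, and whose $\theta$-block is $\mathrm{E}_{*}[s\,s^{\top}]$.

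The key identity, which I expect to be the only genuine obstacle, is $\mathrm{E}_{*}[s(x)]=\nabla_{\theta}\log\int p(x;\theta)\mathrm{d}\mu(x)|_{\theta^{*}}$. Under well-specification $\eta^{*}(x)=p(x;\theta^{*})/\int p(\cdot;\theta^{*})\mathrm{d}\mu$, differentiating the log-partition function and interchanging differentiation and integration yields $\nabla_{\theta}\log\int p(x;\theta)\mathrm{d}\mu(x)|_{\theta^{*}}=\int s(x)\,\eta^{*}(x)\mathrm{d}\mu(x)=\mathrm{E}_{*}[s]$. Justifying the interchange requires the standard domination/regularity conditions that are already implicit in the $C^{2}$ assumption on $q$ together with the finiteness of the normalizing integral, so I would invoke these rather than re-derive them. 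A consequence is that the normalized score satisfies $S(x;\theta^{*})=s(x)-\mathrm{E}_{*}[s]$, i.e. it is exactly the centered unnormalized score, whence $\mathfrak{I}_{\theta^{*}}=\mathrm{Var}_{*}(S)=\mathrm{E}_{*}[s\,s^{\top}]-\mathrm{E}_{*}[s]\,\mathrm{E}_{*}[s]^{\top}$.

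Finally I would apply the block-inverse (Schur-complement) formula to $\Omega$. With the $c$-block equal to $1$, the lower-right block of $\Omega^{-1}$ is the inverse of the Schur complement $\mathrm{E}_{*}[s\,s^{\top}]-(-\mathrm{E}_{*}[s])(1)^{-1}(-\mathrm{E}_{*}[s]^{\top})=\mathrm{E}_{*}[s\,s^{\top}]-\mathrm{E}_{*}[s]\,\mathrm{E}_{*}[s]^{\top}$, which by the previous step is precisely $\mathfrak{I}_{\theta^{*}}$. Hence the $\theta$-block of $\Omega^{-1}$ equals $\mathfrak{I}_{\theta^{*}}^{-1}$, giving $\sqrt{n}(\hat{\theta}_{\text{s}}-\theta^{*})\stackrel{d}{\rightarrow}\mathcal{N}(0,\mathfrak{I}_{\theta^{*}}^{-1})$, as claimed. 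Apart from the differentiation-under-the-integral identity, the argument is pure linear algebra and Gaussian marginalization, so no further delicate estimates are needed.
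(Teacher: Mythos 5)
Your proposal is correct and follows essentially the same route as the paper: both arguments marginalize the Gaussian limit from Theorem \ref{thm:4.2} to the $\theta$-coordinates, exploit the block structure of $\Omega$ induced by $\nabla_{\tau}\log q|_{\tau^{*}}=(-1,\,\nabla_{\theta^{\top}}\log p|_{\theta^{*}})^{\top}$, and identify the $\theta$-block of $\Omega^{-1}$ with $\mathfrak{I}_{\theta^{*}}^{-1}$ by block inversion (the paper invokes the Woodbury formula where you use the Schur complement, and it leaves implicit the differentiation-under-the-integral identity $\mathrm{E}_{*}[\nabla_{\theta}\log p]=\nabla_{\theta}\log\int p\,\mathrm{d}\mu$ that you spell out).
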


Next, we investigate the asymptotic behavior when the sample space is continuous. We use the kernel density estimator as a nonparametric estimator for $\eta^{*}(x)$. Note that any nonparametric estimators can also be applied. Assume that $\eta^{*}(x)$ belongs to a \Holder\ class of smoothness $\nu$ \citep{KorostelevAlexander;KorostelevaOlga2011MS:A}. The kernel density estimator is constructed as $\hat{\eta}_{n}(x) = (n\iota)^{-d_{x}}\sum_{i=1}^{n}K\left\{(x_{i}-x)/\iota\right\}$,
where $\iota$ denotes a bandwidth, $K$ denotes a $d_{x}$-dimensional kernel, and $d_{x}$ denotes a dimension of $x$ \citep{SilvermanB.W.1986Defs}. The overall error $\|\hat{\eta}_{n}-\eta^{*}\|_{\infty}$ is $\mathrm{O}_{p}((\log n/n)^{1/2}\iota^{-d_{x}/2}+\iota^{\nu})$ by choosing high-order kernel \citep{FAN1}. We have $\|\hat{\eta}_{n}-\eta^{*}\|_{\infty}=\mathrm{O}_{p}((\log n/n)^{-\frac{\nu}{2\nu+d_{x}}})$ by selecting the order of bandwidth correctly \citep{stone} . 

From here, we analyze the asymptotic behavior of estimator $\hat{\tau}_{\text{s}}$ when the sample space is continuous. We conclude that the estimator is still efficient. 
\begin{theorem}[Asymptotic normality in continuous space ]
When the sample space is continuous, 
\label{thm:4.4}
under the conditions used in Theorem \ref{thm:4.2} and (2c): $\nu/2>d_x$, (2d): $\int \|\nabla_{\tau}\log q(x;\tau)\|_{\tau^{*}}\mathrm{d}\mu(x)$ is finite, (2e): there is $\epsilon>0$ such that $\mathrm{E}_{*}\{\sup_{\|u\|<\epsilon}\| \nabla_{\tau} \log q(x+u;\tau)|_{\tau^{*}} \|^{4} \}<\infty$, then, $\hat{\tau}_{\text{s}}$ is consistent and 
\begin{align*}
\sqrt{n}(\hat{\tau}_{\text{s}}-\tau^{*})\stackrel{d}{\rightarrow}\mathcal{N}(0,\Omega^{-1}),\,
\sqrt{n}(\hat{\theta}_{\text{s}}-\theta^{*})\stackrel{d}{\rightarrow}\mathcal{N}(0,\mathfrak{J}^{-1}_{\theta^{*}}),
\end{align*}
where $\Omega$ is defined in Theorem \ref{thm:4.2}.
\end{theorem}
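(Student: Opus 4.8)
The plan is to retain the Z-estimator viewpoint of Section~\ref{separablecase} and show that, despite the $\sqrt{n}$-suboptimal convergence rate of the kernel estimator $\hat{\eta}_{n}$, the plug-in leaves the first-order expansion of $\hat{\tau}_{\text{s}}$ identical to the one obtained in Theorem~\ref{thm:4.2}. First I would record the structural fact that drives everything: at $\tau=\tau^{*}$ and $\eta=\eta^{*}$ one has $w\equiv 1$, hence $h_{1}(w)=h_{2}(w)$ and the estimating function vanishes identically, $\phi(x;\tau^{*},\eta^{*})\equiv 0$. Consequently $\sqrt{n}\,\mathbb{P}_{n}\phi(\cdot;\tau^{*},\eta^{*})=0$, so the entire $\sqrt{n}$-contribution must be generated by the fluctuations of $\hat{\eta}_{n}$ around $\eta^{*}$. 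Before the expansion I would upgrade consistency to the continuous setting: uniform convergence of the criterion follows from $\|\hat{\eta}_{n}-\eta^{*}\|_{\infty}\stackrel{p}{\to}0$ together with the compactness and identifiability conditions assumed throughout Section~\ref{investigation}, after which the standard argmin-continuity argument yields $\hat{\tau}_{\text{s}}\stackrel{p}{\to}\tau^{*}$ and $\hat{\theta}_{\text{s}}\stackrel{p}{\to}\theta^{*}$.

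Next I would linearise the estimating equation $\tilde{\mathrm{E}}_{*}\{\phi(\cdot;\hat{\tau}_{\text{s}},\hat{\eta}_{n})\}=0$ in $\tau$ about $\tau^{*}$, using the master expansion for Z-estimators with an infinite-dimensional nuisance (Theorem~6.18 of \cite{BolthausenErwin2002LoPT}; see also \cite{VaartA.W.vander1998As}). A short computation gives the population Jacobian at $(\tau^{*},\eta^{*})$ equal to $c_{0}\,\Omega$, where $c_{0}=f''\{h_{1}(1)\}\,\{h'_{1}(1)-h'_{2}(1)\}^{2}>0$ is strictly positive precisely because of condition~(\ref{asm:asm}\RN{3}), and $\Omega=\mathrm{E}_{*}(\nabla_{\tau}\log q\,\nabla_{\tau^{\top}}\log q|_{\tau^{*}})$ as in Theorem~\ref{thm:4.2}; this is where condition~(2a) enters, to guarantee $\Omega$ is invertible. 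Hence $\sqrt{n}(\hat{\tau}_{\text{s}}-\tau^{*})=-(c_{0}\Omega)^{-1}\sqrt{n}\,\mathbb{P}_{n}\phi(\cdot;\tau^{*},\hat{\eta}_{n})+\mathrm{o}_{p}(1)$, and it remains to analyse $\sqrt{n}\,\mathbb{P}_{n}\phi(\cdot;\tau^{*},\hat{\eta}_{n})$.

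For that term I would Taylor-expand $\phi(x;\tau^{*},\cdot)$ in the nuisance direction about $\eta^{*}$. Using $\phi(\cdot;\tau^{*},\eta^{*})\equiv 0$ and $w=\eta^{*}/\hat{\eta}_{n}$, the leading contribution is $-c_{0}\int(\hat{\eta}_{n}-\eta^{*})\,\nabla_{\tau}\log q(x)|_{\tau^{*}}\,\mathrm{d}\mu(x)$, up to the empirical-process increment $\mathbb{G}_{n}[\phi(\cdot;\hat{\eta}_{n})-\phi(\cdot;\eta^{*})]$ and a quadratic remainder. Writing $g=\nabla_{\tau}\log q|_{\tau^{*}}$ and exploiting that the kernel estimator is itself a sample average, the integral rewrites as $\int\hat{\eta}_{n}\,g\,\mathrm{d}\mu-\mathrm{E}_{*}g=(\mathbb{P}_{n}-P)g_{\kappa}+P(g_{\kappa}-g)$, where $g_{\kappa}$ is $g$ smoothed by the reflected kernel; the identity $\int\hat{\eta}_{n}g\,\mathrm{d}\mu=\mathbb{P}_{n}g_{\kappa}$ is legitimate exactly because of the $\mu$-integrability in condition~(2d). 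Condition~(2e) then supplies the fourth-moment and neighbourhood control needed to pass from $\mathbb{G}_{n}g_{\kappa}$ to $\mathbb{G}_{n}g$ and to invoke the central limit theorem for the smoothed average, while the smoothing bias $\sqrt{n}\,P(g_{\kappa}-g)=O(\sqrt{n}\,\kappa^{\nu})$ is killed by an undersmoothed bandwidth whose admissibility is guaranteed by condition~(2c). Collecting terms, the factor $c_{0}$ cancels and $\sqrt{n}(\hat{\tau}_{\text{s}}-\tau^{*})=\Omega^{-1}\mathbb{G}_{n}\{\nabla_{\tau}\log q|_{\tau^{*}}\}+\mathrm{o}_{p}(1)$, i.e.\ exactly the expansion of Theorem~\ref{thm:4.2}; the stated limits and the reduction of the $\theta$-block to $\mathfrak{J}^{-1}_{\theta^{*}}$ then follow verbatim from Theorem~\ref{thm:4.2} and its corollary by profiling out $c$.

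The main obstacle is controlling the two nonparametric remainders, and this is where the full-sample kernel estimator bites: because $\hat{\eta}_{n}$ is built from the same $\{x_{i}\}$ that enter $\mathbb{P}_{n}$, the map $x\mapsto\phi(x;\tau^{*},\hat{\eta}_{n})$ cannot be treated as a fixed member of a Donsker class, and its complexity grows as the bandwidth shrinks. I would handle the increment $\mathbb{G}_{n}[\phi(\cdot;\hat{\eta}_{n})-\phi(\cdot;\eta^{*})]$ by a stochastic-equicontinuity and entropy bound over the shrinking neighbourhood $\{\eta:\|\eta-\eta^{*}\|_{\infty}\le\|\hat{\eta}_{n}-\eta^{*}\|_{\infty}\}$, and the quadratic remainder by showing $\sqrt{n}\,\|\hat{\eta}_{n}-\eta^{*}\|_{\infty}^{2}=\mathrm{o}_{p}(1)$; balancing these against the undersmoothing needed for the bias is what forces the quantitative smoothness requirement~(2c), $\nu/2>d_{x}$, through the rate $\|\hat{\eta}_{n}-\eta^{*}\|_{\infty}=\mathrm{O}_{p}((\log n/n)^{\nu/(2\nu+d_{x})})$. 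Once these two terms are shown to be $\mathrm{o}_{p}(1)$, everything else is routine.
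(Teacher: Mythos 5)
Your proposal is correct and takes essentially the same route as the paper's own proof: the Z-estimator master expansion with infinite-dimensional nuisance, the observation that $\phi(\cdot;\tau^{*},\eta^{*})\equiv 0$ kills the empirical-process term, the Jacobian $c_{0}\Omega$ with cancellation of the constant $c_{0}$, the quadratic remainder $\mathrm{O}_{p}(\sqrt{n}\,\|\hat{\eta}_{n}-\eta^{*}\|_{\infty}^{2})=\mathrm{o}_{p}(1)$ via the rate implied by (2c), and the replacement of $\sqrt{n}\int \nabla_{\tau}\log q\,(\hat{\eta}_{n}-\eta^{*})\,\mathrm{d}\mu$ by $\mathbb{G}_{n}\{\nabla_{\tau}\log q|_{\tau^{*}}\}$ under (2d)--(2e). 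The only difference is presentational: where the paper cites Theorem 8.11 of Newey and McFadden for the kernel-to-empirical replacement and verifies a Donsker property by composing Lipschitz maps with known Donsker classes, you open up those black boxes directly, via the convolution identity $\int\hat{\eta}_{n}\,g\,\mathrm{d}\mu=\mathbb{P}_{n}g_{\kappa}$ with an undersmoothing bias bound, and via an explicit stochastic-equicontinuity/entropy argument.
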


Assumption (2c) is introduced to control a remainder term. In other words, this condition states that the convergence rate of $\hat{\eta}_n$ is $\mathrm{o}_{p}(n^{-1/4})$. This is a mild assumption to state an asymptotic normality such that the reminder term in the Taylor expansion is negligible. Assumptions (2d) and (2e) are introduced following \citet[Theorem 8.11]{newey}

\subsection{Efficiency in the Non-separable Case}

We consider an asymptotic analysis of estimator $\hat{\theta}_{\text{ns-}\gamma}$ with the $\gamma$-divergence. 
When $\mu$ is a counting measure, by differentiating \eqref{eq:tk} with respect to $\theta$ and multiplying by $-\gamma/\alpha$, we obtain $S_{\alpha,\beta}(x;\theta)$:
\begin{align*}
&\int \frac{\{\nabla_{\theta} \log p(x;\theta)\} w(x;\theta)^{\beta}}{\int w(x;\theta)^{\beta}\mathrm{d\mathbb{P}_{n}}(x)}\mathrm{d\mathbb{P}_{n}}(x)- \\
&\int \frac{\{\nabla_{\theta} \log p(x;\theta) \} w(x;\theta)^{\alpha}}{\int w(x;\theta)^{\alpha}\mathrm{d\mathbb{P}_{n}}(x)}\mathrm{d\mathbb{P}_{n}}(x),
\end{align*}
where $w(x)=p(x;\theta)/\hat{\eta}_{n}(x)$. Importantly, compared with the case in Section \ref{separablecase}, $p(x;\theta)$ is used in $w(x)$ instead of $q(x;\tau)$ because of the scale invariant property of $\gamma$-divergence. The estimator $\hat{\theta}_{\text{ns-}\gamma}$ satisfies $S_{\alpha,\beta}(x;\theta)=0$. 
The estimator $\hat{\theta}_{\text{ns-}\gamma}$ can be also seen as a Z-estimator with infinite and finite-dimensional nuisance parameters, that is, the solution to $\mathrm{\tilde{E}}_{*}[U_{\alpha,\beta}(x;\theta,c_{1},c_{2},\hat{\eta}_{n})]=0$, where $U_{\alpha,\beta}(x;\theta,c_{1},c_{2},\eta)$:
\begin{align*}
\begin{bmatrix}
\nabla_{\theta}\log p(x;\theta) \left \{\frac{ p(x;\theta)^{\beta}}{\exp(c_{1})}\eta(x)^{-\beta} - \frac{ p(x;\theta)^{\alpha}}{\exp(c_{2})}\eta(x)^{-\alpha} \right\}\\
 \exp(c_{1})-p(x;\theta)^{\beta}\eta(x)^{-\beta} \\
\exp(c_{2})-p(x;\theta)^{\alpha}\eta(x)^{-\alpha}
\end{bmatrix}.
\end{align*}

The estimator validity is based on the moment condition $0 = \mathrm{E}_{*}\{U_{\alpha,\beta}(x;\theta,c_{1},c_{2},\eta)|_{\theta^{*},c_{1}^{*},c_{2}^{*},\eta^{*}}\}$,
where $\exp(c_{1}^{*})=\exp(c^{*})^{\beta}$ and $\exp(c_{2}^{*})=\exp(c^{*})^{\alpha}$. Note that $\theta$ is a parameter of interests, and $c_{1}$, $c_{2}$, and $\eta$ are nuisance parameters. 
We can derive the asymptotic results as in Section \ref{separablecase} and conclude that  $\hat{\theta}_{\text{ns-}\gamma}$ is an efficient estimator. 

\begin{theorem} When the sample space is discrete, under the conditions of Theorem \ref{thm:4.2}, we have $\sqrt{n}(\hat{\theta}_{
\text{ns-}\gamma}-\theta^{*})\stackrel{d}{\rightarrow}\mathcal{N}(0,\mathfrak{J}_{\theta^{*}}^{-1})$.
\label{thm:ps-gamma}
When the sample space is continuous, under the conditions of Theorem \ref{thm:4.4}, we have $\sqrt{n}(\hat{\theta}_{\text{ns-}\gamma}-\theta^{*})\stackrel{d}{\rightarrow}\mathcal{N}(0,\mathfrak{J}_{\theta^{*}}^{-1})$.
\end{theorem}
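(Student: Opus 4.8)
The plan is to treat $\hat{\theta}_{\text{ns-}\gamma}$ exactly as the Z-estimator with both finite- and infinite-dimensional nuisance parameters already exhibited in the excerpt, namely the solution of $\tilde{\mathrm{E}}_{*}\{U_{\alpha,\beta}(x;\theta,c_{1},c_{2},\hat{\eta}_{n})\}=0$ in $\psi=(\theta,c_{1},c_{2})$ with $\eta$ plugged in nonparametrically, and then to mirror the argument of Section \ref{separablecase}. The structural observation I would record first is that at the true values $U_{\alpha,\beta}(x;\theta^{*},c_{1}^{*},c_{2}^{*},\eta^{*})\equiv 0$ \emph{pointwise}, not merely in expectation, because $w(x;\theta^{*})=p(x;\theta^{*})/\eta^{*}(x)\equiv\exp(c^{*})$ is constant while $\exp(c_{1}^{*})=\exp(\beta c^{*})$ and $\exp(c_{2}^{*})=\exp(\alpha c^{*})$. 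Consequently the usual empirical-process term $\mathbb{G}_{n}\{U_{\alpha,\beta}(\cdot;\psi^{*},\eta^{*})\}$ vanishes, so the entire stochastic leading term of $\sqrt{n}(\hat{\psi}-\psi^{*})$ is produced by the nonparametric plug-in. Consistency is obtained as in Theorems \ref{thm:4.2} and \ref{thm:4.4}: in the discrete case from identification with $\hat{\eta}_{n}=p_{n}$, and in the continuous case from the uniform rate $\|\hat{\eta}_{n}-\eta^{*}\|_{\infty}=\mathrm{o}_{p}(1)$ of the kernel estimator.

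Next I would perform the one-step Taylor expansion of the estimating equation about $(\psi^{*},\eta^{*})$. Writing $V=\mathrm{E}_{*}\{\nabla_{\psi}U_{\alpha,\beta}|_{*}\}$ and letting $D_{n}=\tilde{\mathrm{E}}_{*}\{U_{\alpha,\beta}(\cdot;\psi^{*},\hat{\eta}_{n})-U_{\alpha,\beta}(\cdot;\psi^{*},\eta^{*})\}$ be the plug-in drift, the vanishing of the pointwise term gives $\sqrt{n}(\hat{\psi}-\psi^{*})=-V^{-1}\sqrt{n}\,D_{n}+\mathrm{o}_{p}(1)$. The matrix $V$ is computed blockwise at the truth using $w(x;\theta^{*})=\exp(c^{*})$: the $\theta\theta$-block is $(\beta-\alpha)\,\mathrm{E}_{*}\{\nabla_{\theta}\log p\,\nabla_{\theta^{\top}}\log p\}$, the $\theta c_{1},\theta c_{2}$-blocks are $\mp\mathrm{E}_{*}\{\nabla_{\theta}\log p\}$, the $c_{1}\theta,c_{2}\theta$-blocks are $-\beta\exp(\beta c^{*})\mathrm{E}_{*}\{\nabla_{\theta^{\top}}\log p\}$ and $-\alpha\exp(\alpha c^{*})\mathrm{E}_{*}\{\nabla_{\theta^{\top}}\log p\}$, and the $(c_{1},c_{2})$-block is $\mathrm{diag}\{\exp(\beta c^{*}),\exp(\alpha c^{*})\}$. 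The key algebraic step is the Schur complement for the $\theta$-block of $V^{-1}$: the exponential factors cancel exactly and it collapses to $(\beta-\alpha)\,\mathfrak{J}_{\theta^{*}}$, where $\mathfrak{J}_{\theta^{*}}=\mathrm{E}_{*}\{\nabla_{\theta}\log p\,\nabla_{\theta^{\top}}\log p\}-\mathrm{E}_{*}\{\nabla_{\theta}\log p\}\mathrm{E}_{*}\{\nabla_{\theta^{\top}}\log p\}$ is the Fisher information of the normalized model identified in Corollary \ref{cor:score}.

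It then remains to identify $\sqrt{n}\,D_{n}$. Linearising $U_{\alpha,\beta}$ in $\eta$ at $\eta^{*}$, the $c_{1}$- and $c_{2}$-components of the drift are $\mathrm{o}_{p}(n^{-1/2})$, since their leading contributions are proportional to $\int(\hat{\eta}_{n}-\eta^{*})\mathrm{d}\mu$ and vanish by mass preservation, while the $\theta$-component reduces to $(\alpha-\beta)\,\tilde{\mathrm{E}}_{*}\{\nabla_{\theta}\log p(x;\theta^{*})(\hat{\eta}_{n}-\eta^{*})/\eta^{*}\}$ plus a remainder. In the discrete case with $\hat{\eta}_{n}=p_{n}$ this is a $V$-statistic whose linear part equals $(\alpha-\beta)\{\tilde{\mathrm{E}}_{*}(\nabla_{\theta}\log p)-\mathrm{E}_{*}(\nabla_{\theta}\log p)\}=(\alpha-\beta)\tilde{\mathrm{E}}_{*}\{S(x;\theta^{*})\}$ and whose quadratic part is $\mathrm{O}_{p}(n^{-1})$. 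Substituting into the expansion, the scalar $(\alpha-\beta)$ cancels against the Schur complement $(\beta-\alpha)\mathfrak{J}_{\theta^{*}}$, yielding $\sqrt{n}(\hat{\theta}_{\text{ns-}\gamma}-\theta^{*})=\mathfrak{J}_{\theta^{*}}^{-1}\mathbb{G}_{n}\{S(x;\theta^{*})\}+\mathrm{o}_{p}(1)$ and hence the limit $\mathcal{N}(0,\mathfrak{J}_{\theta^{*}}^{-1})$; notably all dependence on $\alpha,\beta$ and on $c^{*}$ disappears, confirming efficiency for every admissible pair $\alpha\neq\beta$.

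The main obstacle is the continuous case, where the reduction of $\sqrt{n}\,D_{n}$ must be justified for the kernel estimator rather than the empirical measure. Here I would invoke the plug-in machinery behind Theorem \ref{thm:4.4} (Theorem 8.11 of Newey \& McFadden, 1994): condition (2e) controls the empirical-process fluctuation of the linearised term, condition (2d) guarantees the integral defining the influence function is finite, and condition (2c), $\nu/2>d_{x}$, forces both the smoothing bias of $\hat{\eta}_{n}$ and the quadratic remainder of the $\eta$-expansion to be $\mathrm{o}_{p}(n^{-1/2})$, being equivalent to $\|\hat{\eta}_{n}-\eta^{*}\|_{\infty}=\mathrm{o}_{p}(n^{-1/4})$. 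The delicate point is that $\hat{\eta}_{n}(x_{i})$ depends on all samples, so the linearised drift is a degenerate-type $V$-statistic; showing that its genuinely $U$-statistic part reproduces exactly $(\alpha-\beta)\mathbb{G}_{n}\{S(x;\theta^{*})\}$ while the diagonal and bias parts are negligible is the technical heart of the argument, and is where the smoothness budget (2c) is spent.
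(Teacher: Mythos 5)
Your proposal is correct and follows essentially the same route as the paper's own proof: both treat $\hat{\theta}_{\text{ns-}\gamma}$ as a Z-estimator in $(\theta,c_{1},c_{2})$ with $\eta$ as an infinite-dimensional nuisance, exploit the pointwise identity $U_{\alpha,\beta}(x;\sigma^{*},\eta^{*})\equiv 0$ (from $w(x;\theta^{*})\equiv\exp(c^{*})$) to kill the empirical-process term, compute the Jacobian blockwise so that the Schur complement collapses to $(\beta-\alpha)\mathfrak{J}_{\theta^{*}}$, and linearize the plug-in drift in $\eta$ to obtain $(\alpha-\beta)\mathbb{G}_{n}\{\nabla_{\theta}\log p\}$, with the $(\alpha-\beta)$ factors cancelling. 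If anything, your write-up is slightly more careful than the paper's on two points it glosses over --- the explicit dismissal of the $c_{1},c_{2}$ drift components via mass preservation (needed since the $\theta$-row of $V^{-1}$ has nonzero off-diagonal blocks) and the correct off-diagonal Jacobian entries $-\beta\exp(\beta c^{*})\mathrm{E}_{*}\{\nabla_{\theta^{\top}}\log p\}$, $-\alpha\exp(\alpha c^{*})\mathrm{E}_{*}\{\nabla_{\theta^{\top}}\log p\}$ (the paper's displayed matrix contains sign slips that do not affect its conclusion).
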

\begin{remark}
Refer to Appendix \ref{sec:mis} regarding the result when the models are mis-specified. 
\end{remark}

\subsection{Convexity}

Convexity is important for optimization. 
We consider herein the convexity of loss functions.  
Suppose that the model is expressed by unnormalized exponential models, $q(x;\tau)=\exp(\tau^{\top}\xi(x))$, where $\xi(x)$ is a basis function and the corresponding basis function for $c$ is $-1$. This model contains many types of unnormalized models such as Boltzmann machines and generalized gamma distributions. Regarding separable estimators $\hat{\tau}_{\text{s}}$ in Example \ref{ex:gene}, we can find sufficient 
conditions to ensure the convexity of loss functions as Theorem 
\ref{thm:concavity}. Refer to Appendix \ref{sec:convexity} for the specific examples of $f(x)$. 

\begin{theorem}
\label{thm:concavity}
Suppose that $f(z)$ satisfies the inequality 
\begin{align*}
(2z-1)f''(z)+z(z-1)f'''(z)\geq 0    
\end{align*}
for arbitrary $z>0$. Then, 
the loss function of the estimator $\hat{\tau}_{\text{s}}$ in Example \ref{ex:gene} is convex in $\tau$. 
\end{theorem}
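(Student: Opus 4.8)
The plan is to reduce the convexity question in the vector $\tau$ to a one-dimensional statement by exploiting the exponential parametrization $q(x;\tau)=\exp(\tau^{\top}\xi(x))$. Write $a_i:=\hat{\eta}_{n}(x_i)$, a fixed positive quantity independent of $\tau$. Each summand of the loss in Example \ref{ex:gene} depends on $\tau$ only through the scalar $w_i=\exp(\tau^{\top}\xi(x_i))/a_i=\exp(s_i(\tau))$, where $s_i(\tau):=\tau^{\top}\xi(x_i)-\log a_i$ is an \emph{affine} function of $\tau$. Since a sum of convex functions, each precomposed with an affine map, is convex, it suffices to prove that the scalar map $s\mapsto G(s):=g(\exp(s))$ is convex on $\mathbb{R}$, where $g(w):=(w-1)f'(w)-f(w)$ is the per-sample integrand obtained from $-f'(w_i)+w_if'(w_i)-f(w_i)$.

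First I would differentiate $g$, where a cancellation leaves the compact form $g'(w)=(w-1)f''(w)$. Then, using $\mathrm{d}w/\mathrm{d}s=w$ because $w=\exp(s)$, the chain rule gives $G'(s)=w\,g'(w)=w(w-1)f''(w)$, and differentiating once more (product rule on $w(w-1)f''(w)$ together with the extra factor $\mathrm{d}w/\mathrm{d}s=w$) yields $G''(s)=w\big[(2w-1)f''(w)+w(w-1)f'''(w)\big]$. The third-order differentiability required here is supplied by Assumption \ref{as:1}. Because $w=\exp(s)>0$ for every $s$, the sign of $G''(s)$ is governed entirely by the bracketed factor, so $G''\geq 0$ on all of $\mathbb{R}$ is equivalent to $(2z-1)f''(z)+z(z-1)f'''(z)\geq 0$ for all $z>0$, which is precisely the hypothesis of the theorem.

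To finish, I would assemble the Hessian of the full objective $L(\tau)=n^{-1}\sum_{i=1}^{n}G(s_i(\tau))$. Since each $s_i$ is affine, $\nabla_{\tau}^{2}\{G(s_i(\tau))\}=G''(s_i(\tau))\,\xi(x_i)\xi(x_i)^{\top}$, a nonnegative scalar times a rank-one positive semidefinite matrix; summing over $i$ shows $\nabla_{\tau}^{2}L(\tau)\succeq 0$, hence $L$ is convex in $\tau$.

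The computation carries no deep obstacle; the only point demanding care is the bookkeeping of the two successive chain-rule differentiations that produce $G''$, where it is easy to drop the leading factor $w$ or mishandle the product rule on $w(w-1)f''(w)$. The genuine content is the initial observation that, under the exponential parametrization, every summand becomes a function of a single affine coordinate of $\tau$, collapsing a matrix-convexity requirement into the scalar inequality stated.
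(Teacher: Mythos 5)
Your proposal is correct and follows essentially the same route as the paper's proof: both reduce to a per-sample Hessian computation under the exponential parametrization, arriving at the identical coefficient $w_i\left\{(2w_i-1)f''(w_i)+w_i(w_i-1)f'''(w_i)\right\}$ multiplying the rank-one matrix $\xi(x_i)\xi(x_i)^{\top}$, whose nonnegativity is exactly the hypothesis. Your framing via the scalar map $s\mapsto g(\exp(s))$ precomposed with the affine coordinate $s_i(\tau)$ is just a cleaner bookkeeping of the same chain-rule calculation the paper summarizes as ``some calculation yields the Hessian matrix.''
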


\section{NUMERICAL EXPERIMENTS}
\label{experiments}

 We present herein several examples to illustrate the performance of the proposed procedure, and demonstrate that the asymptotic variance of the proposed estimators is the same as that of the MLE. We ran simulations in the settings of restricted Boltzmann machines, submodular diversity models, generalized gamma distributions. Regarding additional experiments using Poisson distributions, gamma distributions, refer to Appendix \ref{sec:additional}. 
 
 We chose $h_{1}(x)=x,\,h_{2}(x)=1$ as in Example \ref{ex:gene}. We used the following package for the kernel density estimation \citep{npnp}. We also used 6-th order kernel, and the bandwidth was selected by a cross validation based on the likelihood. We compare the following estimators:
\begin{itemize}
    \item \textbf{MLE}: estimator by the MLE. 
     \item \textbf{NCE}: estimator by the NCE \citep{noise}. The sample size of the auxiliary distribution is set as the original sample size unless otherwise noted. 
          \item \textbf{s-KL, s-Chi, s-JS}: proposed estimators, i.e., SDRME with a separable divergence $\hat{\theta}_{\text{s}}$. When $f=x\log x$, denote \textbf{s-KL}. When $f=0.5x^{2}$, denote \textbf{s-Chi}. When $f=2x\log x-2(1+x)\log(1+x)$, denote \textbf{s-JS}.
    \item \textbf{ns-$\gamma$}: SDRME with the non-separable $\gamma$-divergence, $\hat{\theta}_{\text{ns-}\gamma}$. Regarding the choice of $\alpha,\beta$, see each section. We selected several $\alpha,\beta$ following an experiment section \citep{takekana2017}  
\end{itemize}
We do not compare the proposed estimators with score matching type estimators because the superiority of the statistical efficiency of NCE over score matching has been already shown \citep{noise}. 

\subsection{Restricted Boltzmann Machine (RBM)}
\label{sec:rbm}

The RBM has parameter $W\in\Rbb^{d_v\times d_h}$. The joint probability of the RBM with the visible nodes $\bm{v}\in\{+1,-1\}^{d_v}$ and hidden nodes $\bm{h}\in\{+1,-1\}^{d_h}$ is 
$P(\bm{v},\bm{h};W)\propto e^{\bm{v}^T W \bm{h}}$ and the marginal probability of $\bm{v}$ is $P(\bm{v};W)\propto\prod_{k=1}^{d_v}\cosh\{(\bm{v}^TW)_k\}$. 
The unnormalized model of the RBM is thus expressed as $q(\bm{v};\tau) = e^{-c}\prod_{k=1}^{d_v}\cosh\{(\bm{v}^TW)_k\}$ with parameter $\tau=(c,W)$. 

\begin{table}[t]
 \caption{Monte Carlo mean of the KL divergence between the true and estimated densities scaled by sample size, $n \mathrm{KL}(\eta^*(\bm{v}),P(\bm{v};\widehat{W}))$, in the RBM. Parenthesis indicates the standard deviation. 
 The computational time (seconds) is measured per each iteration when $n=4000$.}
 \label{tbl:RBM_KL}
\centering
$\dim\bm{v}=10, \dim\bm{h}=2$, iteration: 20 \\ 
 \scalebox{0.9}{\begin{tabular}{ccccc}\toprule
 $n$ &       s-KL &  ns-$\gamma$  & NCE       & MLE\\ \midrule
1000&12.8(3.56)& 13.9(3.23)& 19.7(5.31)& 15.5(4.05)  \\
2000&12.4(4.17)& 13.1(5.13)& 17.3(5.68)& 12.8(4.35)  \\
4000&14.3(5.77)& 14.1(4.68)& 18.0(7.35)& 14.5(6.38)  \\  \midrule
Time&0.35 & 0.21 & 2.25 & 0.24                       \\ \bottomrule 
\end{tabular}}
\vspace{.5pt}

$\dim\bm{v}=18, \dim\bm{h}=2$, iteration: 20\\ 
 \scalebox{0.9}{\begin{tabular}{ccccc}\toprule
 $n$ &       s-KL &    ns-$\gamma$ &  NCE       & MLE\\  \midrule
1000& 18.7(4.03)&  21.2(4.57)&  76.7(87.9)& 30.5(4.49)\\
2000& 21.5(3.29)&  23.0(3.34)&  51.9(23.3)& 31.2(5.70)\\
4000& 25.9(8.97)&  25.4(8.38)&  38.8(9.25)& 30.0(6.31)\\  \midrule
Time&      2.28 &       0.79 &       5.60 &     43.5  \\ \bottomrule
\end{tabular}}
\vspace{-0.3cm}
\end{table}

We compared four estimators: \textbf{s-KL}, \textbf{ns-$\gamma$}, \textbf{NCE} and \textbf{MLE}. Regarding the results of  \textbf{s-Chi},\,\textbf{s-JS}, refer to Appendix \ref{sec:additional}. 
The parameters in \textbf{ns-$\gamma$} were set to $\alpha=0.01, \beta=-1$ and $\gamma=1.01$. 
In low dimensional models, MLE is feasible because the normalized constant is accessible in practice. 
For \textbf{s-KL} and \textbf{ns-$\gamma$}, we incorporated the sample-based regularization to make the estimator stable. 
For the empirical distribution of the data $\hat{\eta}_n(\bm{v})$, the mixture model $(1-1/n)\hat{\eta}_n(\bm{v})+u_n(\bm{v})/n$ 
was used as the non-parametric estimator of $\eta(\bm{v})$, where $u_n(\bm{v})$ is the empirical distribution of 
$n$ samples generated from the uniform distribution over $\{+1, -1\}^{d_v}$. The additional term $u_n$ is expected to work as a regularization. In the NCE, the auxiliary distribution is defined as the uniform distribution, and the sample size 
from $a(y)$ is set to $5 n$. 

Table~\ref{tbl:RBM_KL} shows Monte Carlo mean, and the standard deviation of the KL divergence between the true and estimated densities scaled by sample size. 
We confirm that the proposed methods, \textbf{s-KL} and \textbf{ns-$\gamma$}, are comparable to the MLE while they do not suffer from the computational burden of the normalization constant.\footnote{When $\bm{v}=10$, MLE is fast because the calculation of the normalizing constants is easy. The computational problem in MLE arises when $\bm{v}=18$.}
The accuracy of the NCE is lower than the other methods, and would be improved using larger samples from the auxiliary distribution, while the computational cost increases.

\subsection{Submodular Diversity Model}
\label{sec:sub}

Several types of probabilistic submodular models have been developed to model the diversity of item sets for applications such as recommendation systems and information summary. Among them, \citet{Tschiatschek} proposed the FLID (Facility LocatIon Diversity) model, which is a probability distribution over subsets $S$ of $\{ 1,\cdots,V \}$. Specifically, FLID is defined as 
\begin{align*}
P(S; u,w) \propto \exp \left\{ \sum_{i \in S} u_i + \sum_{d=1}^L (\max_{i \in S} w_{i,d} - \sum_{i \in S} w_{i,d}) \right\},
\end{align*}
where $u_i$ and $w_i = (w_{i,1},\cdots,w_{i,L})$ represent the quality and latent embedding vector of the $i$-th item, respectively ($i=1,\cdots,n$).
Since the computation of the normalization constant of FLID is prohibitive, \citet{Tschiatschek} proposed to estimate this model by using the NCE. 

We compared \textbf{s-KL}, \textbf{ns-$\gamma$} and  \textbf{NCE}. The parameters in \textbf{ns-$\gamma$} were set to $\alpha=-0.01, \beta=0.99$ and $\gamma=1.01$.  
We generated samples from the FLID model with $L=2$ and $V=12$.
Each entry of $u$ and $w$ herein was sampled independently from the uniform distribution on $[0,1]$.
For the auxiliary distribution in the \textbf{NCE}, we used the product distribution following \cite{Tschiatschek}. 

Table~\ref{tab:submodular} presents the Monte Carlo mean and standard error of the KL divergence between the true and estimated densities. The computation time of each estimator is also presnted. These results indicate the significant superiority of \textbf{s-KL} to \textbf{NCE} in terms of statistical efficiency with a reasonable computational time. We also observe that the performance of \textbf{s-KL} is more stable than that of \textbf{ns-$\gamma$}. 

\begin{table}
    \centering
    \caption{Monte Carlo mean of the KL divergence between the true and estimated densities, scaled by sample size in a submodular diversity model. The computational time (seconds) is measured per iteration when $n=2 \times 10^5$.}
    \begin{tabular}{rccc} \toprule 
    $n$     &   s-KL  & ns-$\gamma$  &   NCE \\  \midrule
    $5 \times 10^4$  & $36.4 (7.3)$ & $46.6 (5.9)$  & $44.4 (4.0)$ \\
    $1 \times 10^5$  & $21.5 (4.9)$  & $46.4 (4.2)$  & $37.5 (7.8)$ \\
    $2 \times 10^5$ & $16.9 (7.6)$ & $69.3 (7.0)$ &  $35.9 (20.9)$ \\  \midrule 
Time & 4911 & 2020 & 9827 \\ \bottomrule
    \end{tabular}
    \label{tab:submodular}
    \vspace{-0.3cm}
\end{table}

\subsection{Generalized Gamma Distribution}

We consider herein a distribution with the following unnormalized density $
P(x;\theta_1,\theta_2)\propto \exp(-\theta_{1}x^{2})x^{\theta_{2}}\mathrm{I}(x>0)$, when the baseline measure is the Lebesgue measure. This is referred to as a generalized gamma distribution \citep{gamma}. We set the true value at $(\theta_{1},\theta_{2})=(1.3,1.3)$. 

We compared three estimators: \textbf{s-KL}, \textbf{ns-$\gamma$} and \textbf{NCE}. The parameters in \textbf{ns-$\gamma$} were set to $\alpha=-0.01, \beta=0.99$ and $\gamma=1.01$. Unlike Sections \ref{sec:rbm} and \ref{sec:sub}, we used a kernel density estimator for \textbf{s-KL} and \textbf{ns-$\gamma$}, and a half-normal distribution for \textbf{NCE} as an auxiliary distribution. 

Table \ref{tab:expo} presents the Monte Carlo mean of the mean squared errors. The result demonstrates the significant superiority of \textbf{s-KL} and \textbf{ns-$\gamma$} over \textbf{NCE} in terms of statistical efficiency with a reasonable computational time even when the sample space is continuous.

\begin{table}
    \centering
    \caption{Monte Carlo mean of the mean squared errors scaled by the sample size in a generalized gamma distribution. The computational time (seconds) is measured per iteration when $n=2000$.}
    \begin{tabular}{rccc} \toprule 
 $n$     &   s-KL  & ns-$\gamma$  &   NCE \\ \midrule
 500    & 68.2(10.3)  & 77.6(9.3)  & 250.3(64.0) \\
 1000  & 67.9(7.7)  & 76.3(5.3)  &  240.7(69.7) \\
 2000 & 68.3(5.4) & 75.3(4.6) &  246.1(43.5)\\ \midrule
Time & 1.3 & 1.3 & 0.5 \\ \bottomrule
    \end{tabular}
    \label{tab:expo}
    \vspace{-0.3cm}
\end{table}

\section{CONCLUSION}

We have proposed self density-ratio matching estimators. Importantly, proposed estimators are as statistically efficient as MLE without calculating normalizing constants, regardless of whether the sample space is discrete or continuous. In addition, they do not rely on any sampling techniques. Among the several estimators, we recommend using {\bf s-KL} with $h_{1}(x)=x,\,h_{2}(x)=1$  for practical purposes because its experimental performance is stable as shown in in Appendix \ref{sec:additional}, its loss function is convex, and it is seen as a projection regarding the KL divergence, even when the model is misspecified. More extensive comparison is a future work. 


\newpage
\subsubsection*{Acknowledgements}
We would like to thank the anonymous reviewers for
their insightful comments and suggestions. 

Masatoshi Uehara was supported by MASASON Foundation. Takeru Matsuda was partially supported by JSPS KAKENHI Grant Numbers 16H06533 and 19K20220.
\bibliographystyle{chicago}
\bibliography{paper-ref}

\newpage 
\appendix
\newpage 

\onecolumn

\section{NOTATION}\label{sec:notation}

\begin{center}
\begin{tabular}{ l| l}
$n$ & Total sample    \\
$\mu$ & Baseline measure \\
$p(x;\theta)$ & Unnormalized Model    \\ 
$\tilde{p}(x;\theta)$ & Normalized model  \\
$c$ & Normalizing constant parameter    \\ 
$\tau$ & $(c,\,\theta^{\top})^{\top}$ \\
$q(x;\tau)$ & One-parameter extended model $\exp(-c) p(x;\theta)$  \\
$\Theta$ & Parameter space for $\theta$ \\
$\Theta_{\tau}$ & Parameter space for $\tau$\\
$\eta^{*}(x)$ & True density \\
$\hat{\eta}_{n}(x)$ & Nonparametric estimator \\
$\mathrm{F}_{\eta^{*}}$ & True distribution  \\
$p_{n}$ & Empirical density \\
$\mathrm{E}_{*}$  & Expectation under true distribution \\
$\tilde{\mathrm{E}}$  & Expectation under empirical distribution \\
$\mathrm{Var}_{*}$ & Variance under true distribution \\
$\nabla_{x}$ & Differentiation with respect to $x$\\
$\mathbb{P}_{n}$ & Empirical distribution of $n$ samples from $\mathrm{F}_{\eta^{*}}$ \\
$\mathbb{G}_{n}$ & Empirical process $\sqrt{n}(\mathbb{P}_{n}-\mathrm{F}_{\eta^{*}})$ \\
$\mathfrak{I}_{\theta}$ & Fisher information matrix for $\theta$ \\
$|_{\tau^{*}}$ & the value at $\tau=\tau^{*}$ \\
$\mathcal{N}(A,B)$ & Normal distribution with mean $A$, variance $B$ \\
$L^{2}(\mathrm{F}_{\eta^{*}})$ & $\mathrm{L}^{2}$-space with the underlying distribution $\mathrm{F}_{\eta^{*}}$ \\
$\mathcal{X}$ & Sample space  \\
$B_{f}(u,v)$ & Bregman divergence based on $f$ between $u$ and $v$ \\
$K$ & Kernel \\
$\hat{\tau}_{\text{s}}$ & Self density-ratio matching estimator with a separable divergence. \\ & Note that it is equal to  $(\hat{c}_{\text{s}},\hat{\theta}_{\text{s}})$\\
$\hat{\tau}_{\text{ns-}\gamma}$ & Self density-ratio matching estimator with a $\gamma$-divergence \\  
$\hat{\tau}_{\text{ns-ps}}$ & Self density-ratio matching estimator with a pseudo spherical divergence \\  
$\|\cdot \|$ & Euclidean norm \\
$\|\cdot \|_{\infty}$ & $l_{\infty}$ norm 
\end{tabular}
\end{center}

\newpage 

\section{CONVEXITY}
\label{sec:convexity}

We see specific examples of $f(x)$, satisfying the above conditions in Theorem \ref{thm:concavity}.

\begin{example}
 For the  functions $f(z)=z\log{z}$ and $f(z)=2z\log{z}-2(1+z)\log(1+z)$, we can confirm the conditions in Theorem \ref{thm:concavity}. However, the function $f(z)=0.5z^2$ does not meet the above conditions.
 In the same way, we can find that the function $f(z)=z^m/\{m(m-1)\}$ with a natural number $m\geq 2$
 does not meet the conditions. 
\end{example}

We have a similar result for non-separable estimators. As for the estimator with $\gamma$-divergence, the loss function is convex if the equality $\delta=0$ holds.

\section{ASYMPTOTICS UNDER MISSPECIFICATION}

\subsection{Misspecified Case}

We have assumed that the model includes a true density. We can also consider a misspecified case, showing that the behavior of the proposed estimators associated with $f(x)=x\log x$, i.e., \eqref{eq:sdrme_mle} is asymptotically the same as that of MLE. This implies that similar to the MLE, the proposed estimator with $f(x)=x\log x$ converges to the parameter that minimizes the KL-divergence between the model and the true distribution. Furthermore, its asymptotic variance is the same, even when the model is misspecified. We specifically have the following theorem. 

\begin{theorem}\label{thm:mis}
Under certain regularity conditions, we have
\begin{align*}
\sqrt{n}(\hat{\tau}_{\text{s}}-\tau^{*})\stackrel{d}{\rightarrow} \mathcal{N}(0,\Omega_{1m}^{-1}\Omega_{2m}\Omega_{1m}^{-1}),
\end{align*}
where $\tau^{*}=(c^{*},\theta^{*})$ is a value such that 
\begin{align*}
    \exp(c^{*})=\int p(x;\theta^{*})\mathrm{d}\mu(x),\,0 =\mathrm{E}_{*}\{S(x;\theta)\},
\end{align*}
and
\begin{align*}
\Omega_{1m}&=-\mathrm{E}_{*}\left[\left\{1-\frac{q(x;\tau)}{\eta^{*}(x)}\right\}\nabla_{\tau^{\top}}\nabla_{\tau}\log q(x;\tau)|_{\tau^{*}}\right] \\ 
&+\mathrm{E}_{*}\left\{\frac{q(x;\tau)}{\eta^{*}(x)}\nabla_{\tau}\log q(x;\tau)\nabla_{\tau^{\top}}\log q(x;\tau)|_{\tau^{*}}\right\}, \\
\Omega_{2m}&=\mathrm{Var}_{*}\left\{\nabla_{\tau}\log q(x;\tau)|_{\tau^{*}}\right\}.
\end{align*}
\end{theorem}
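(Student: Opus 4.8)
The plan is to treat the estimator $\hat{\tau}_{\text{s}}$ for the KL choice $f(x)=x\log x$, whose loss is \eqref{eq:sdrme_mle}, as a Z-estimator with an infinite-dimensional nuisance parameter exactly as in Section~\ref{separablecase}, but now dropping the well-specification assumption. Differentiating \eqref{eq:sdrme_mle} in $\tau$ shows that $\hat{\tau}_{\text{s}}$ solves $\tilde{\mathrm{E}}_{*}\{\phi(x;\tau,\hat{\eta}_{n})\}=0$ with estimating function $\phi(x;\tau,\eta)=\nabla_{\tau}\log q(x;\tau)\{q(x;\tau)/\eta(x)-1\}$. The first step is to identify the population root $\tau^{*}$ of $\mathrm{E}_{*}\{\phi(x;\tau,\eta^{*})\}=0$. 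Using $\nabla_{c}\log q=-1$, the $c$-block reads $1-\int q(x;\tau^{*})\mathrm{d}\mu(x)=0$, giving $\exp(c^{*})=\int p(x;\theta^{*})\mathrm{d}\mu(x)$; using $\nabla_{\theta}\log q=\nabla_{\theta}\log p$ together with $\int q(x;\tau^{*})\nabla_{\theta}\log p\,\mathrm{d}\mu=\nabla_{\theta}\log\int p\,\mathrm{d}\mu|_{\theta^{*}}$, the $\theta$-block collapses to $\mathrm{E}_{*}\{S(x;\theta^{*})\}=0$. This reproduces the characterization of $\tau^{*}$ in the statement and identifies $\theta^{*}$ with the KL-projection parameter.

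Consistency follows as in Theorem~\ref{thm:4.2}: since $\hat{\eta}_{n}\to\eta^{*}$, the loss converges uniformly to $\mathrm{E}_{*}[-\log(q/\eta^{*})+q/\eta^{*}-1]=B_{f}(\,\cdot\,;\eta^{*})$, whose unique minimizer under the assumed identifiability is $\tau^{*}$. I would then perform a one-term Taylor expansion of $\tilde{\mathrm{E}}_{*}\{\phi(x;\tau,\hat{\eta}_{n})\}$ about $\tau^{*}$. Since $\nabla_{\tau}\phi=(q/\eta-1)\nabla_{\tau^{\top}}\nabla_{\tau}\log q+(q/\eta)\nabla_{\tau}\log q\,\nabla_{\tau^{\top}}\log q$, the Jacobian satisfies $\tilde{\mathrm{E}}_{*}\{\nabla_{\tau}\phi(x;\tau^{*},\hat{\eta}_{n})\}\stackrel{p}{\to}\Omega_{1m}$, which is exactly the matrix in the statement. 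Note that under misspecification $q(x;\tau^{*})\neq\eta^{*}(x)$, so, unlike in Theorem~\ref{thm:4.4}, the first term of $\Omega_{1m}$ does not vanish and the matrix must be carried through in full.

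The crux is the behaviour of $\sqrt{n}\,\tilde{\mathrm{E}}_{*}\{\phi(x;\tau^{*},\hat{\eta}_{n})\}$ under the nonparametric plug-in. Writing $g=\nabla_{\tau}\log q|_{\tau^{*}}$, I would linearize in $\hat{\eta}_{n}-\eta^{*}$ following Theorem 8.11 of Newey \& McFadden (1994). The pathwise derivative of $\eta\mapsto\mathrm{E}_{*}\{\phi(x;\tau^{*},\eta)\}$ at $\eta^{*}$ in a direction $\delta$ is $-\int g\,(q/\eta^{*})\,\delta\,\mathrm{d}\mu$, so the plug-in correction along $\delta=\hat{\eta}_{n}-\eta^{*}$, once represented as an empirical average, contributes $-(\tilde{\mathrm{E}}_{*}-\mathrm{E}_{*})\{g\,q/\eta^{*}\}$ up to a term absorbed into a remainder $R_{n}$. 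Since $\mathrm{E}_{*}\{\phi(x;\tau^{*},\eta^{*})\}=0$ gives $\tilde{\mathrm{E}}_{*}\{\phi(x;\tau^{*},\eta^{*})\}=(\tilde{\mathrm{E}}_{*}-\mathrm{E}_{*})\{g\,q/\eta^{*}\}-(\tilde{\mathrm{E}}_{*}-\mathrm{E}_{*})\{g\}$, adding the correction cancels the $g\,q/\eta^{*}$ contributions and leaves $\sqrt{n}\,\tilde{\mathrm{E}}_{*}\{\phi(x;\tau^{*},\hat{\eta}_{n})\}=-\mathbb{G}_{n}\{g\}+\mathrm{o}_{p}(1)$. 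In the discrete case $\hat{\eta}_{n}=p_{n}$ makes the weight $q/\hat{\eta}_{n}$ cancel $p_{n}$ exactly, so this holds with $R_{n}=0$, recovering the mechanism of Theorem~\ref{thm:4.2}. Combining with the Jacobian yields $\sqrt{n}(\hat{\tau}_{\text{s}}-\tau^{*})=\Omega_{1m}^{-1}\mathbb{G}_{n}\{g\}+\mathrm{o}_{p}(1)$, and the central limit theorem $\mathbb{G}_{n}\{g\}\stackrel{d}{\to}\mathcal{N}(0,\mathrm{Var}_{*}\{g\})=\mathcal{N}(0,\Omega_{2m})$ gives the sandwich form $\Omega_{1m}^{-1}\Omega_{2m}\Omega_{1m}^{-1}$.

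The main obstacle is controlling $R_{n}$ in the continuous case. The quadratic remainder of $1/\eta$ produces a degenerate term of order $(\hat{\eta}_{n}-\eta^{*})^{2}$, and because $\hat{\eta}_{n}$ is evaluated at the very sample points entering the outer average, this is a $V$-statistic whose diagonal and off-diagonal parts, together with the kernel smoothing bias, must be shown to be $\mathrm{o}_{p}(n^{-1/2})$ after scaling. This is exactly where the fast-rate condition (2c) $\nu/2>d_{x}$ (equivalently $\|\hat{\eta}_{n}-\eta^{*}\|_{\infty}=\mathrm{o}_{p}(n^{-1/4})$) and the integrability conditions (2d)--(2e) enter, via the empirical-process bounds already used for Theorem~\ref{thm:4.4}. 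A secondary point is to confirm that the cancellation of the $g\,q/\eta^{*}$ terms relies only on the misspecified moment identity $\mathrm{E}_{*}\{g\,q/\eta^{*}\}=\mathrm{E}_{*}\{g\}$ and not on $q(x;\tau^{*})=\eta^{*}(x)$, so that the reduction to $-\mathbb{G}_{n}\{g\}$ and hence the stated expansion survive under misspecification.
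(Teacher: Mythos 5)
Your proposal is correct and follows essentially the same route as the paper's own proof (Lemma \ref{lem:mis}): the same estimating function $\phi(x;\tau,\eta)=\{q(x;\tau)/\eta(x)-1\}\nabla_{\tau}\log q(x;\tau)$ (up to sign), the same characterization of $\tau^{*}$, the Jacobian $\Omega_{1m}$, and the same key cancellation in which the linearized plug-in (drift) term $-\mathbb{G}_{n}\{g\,q/\eta^{*}\}$ combines with the now non-vanishing empirical-process term $\mathbb{G}_{n}\{g\,q/\eta^{*}\}-\mathbb{G}_{n}\{g\}$ to leave $\Omega_{1m}^{-1}\mathbb{G}_{n}\{\nabla_{\tau}\log q\}$. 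The only cosmetic difference is that you carry out the Z-estimator expansion by hand with explicit remainder control, where the paper invokes Theorem 6.17 of \cite{BolthausenErwin2002LoPT} to package the stochastic-equicontinuity step.
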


Two implications are observed in Theorem \ref{thm:mis}. First, this theorem is reduced to Theorem \ref{thm:4.2} when the model includes the true distribution, i.e., $\eta^{*}(x)=q(x;\tau^{*})$. Second, the resulting form of ${\Omega}_{1m},{\Omega}_{2m}$ has a form similar to the terms that appear in the asymptotic result of MLE estimator when the model is normalized. For details, see Appendix \ref{sec:mis}. 

\label{sec:mis}

We have assumed that the model includes true density. In this section, we consider a misspecified case, showing that the behavior of the proposed estimators associated with KL divergence is asymptotically the same as that of MLE when $f(x)=x
\log x$. This implies that similarly to MLE, the proposed estimator converges to the parameter that minimizes the KL-divergence between the model and the true distribution, even when the model is misspecified. 

Before analyzing the proposed estimators, we review a misspecified case where the model can be normalized properly. The MLE under the misspecified model is equivalent to finding the closest model to the true distribution regarding KL divergence \citep{white1}. 
The MLE estimator $\hat{\theta}_{\text{MLE}}$ converges to the value maximizing the function $\theta\to \mathrm{E}_{*}\{\log p(x;\theta)-\log\int p(x;\theta)\mathrm{d}\mu(x)\}$. We denote this value as $\theta^{*}$. The value $\theta^{*}$ satisfies the equation $\mathrm{E}_{*}\{S(x;\theta)\}=0$, where $S(x;\theta)$ is 
\begin{align*}
    \nabla_{\theta}\left\{\log p(x;\theta)-\log \int p(x;\theta)\mathrm{d}\mu(x)\right\}.
\end{align*}
It is well-known that the estimator $\hat{\theta}_{\text{MLE}}$ has the following asymptotic property, that is, $\sqrt{n}(\hat{\theta}_{\text{MLE}}-\theta^{*})$ converges weakly to the normal distribution with mean $0$ and variance
\begin{align*}
\mathrm{E}_{*}\{\nabla_{\theta^{\top}}S(x;\theta)|_{\theta^{*}}\}^{-1}\mathrm{Var}_{*}\{S(x;\theta)|_{\theta^{*}}\}\mathrm{E}_{*}\{\nabla_{\theta^{\top}}S(x;\theta)|_{\theta^{*}}\}^{-1}    
\end{align*}

The term $\mathrm{E}_{*}\{\nabla_{\theta^{\top}}S(x;\theta)|_{\theta^{*}}\}$ is  
\begin{align*}
  &\mathrm{E}_{*}\left[\left\{1-\frac{\tilde{p}^{*}(x)}{\eta^{*}(x)}\right\}\nabla_{\theta^{\top}}\nabla_{\theta}\log p(x;\theta)|_{\theta^{*}}\right]  +\mathrm{E}_{*}\left\{\frac{\tilde{p}^{*}(x)}{\eta^{*}(x)}\nabla_{\theta}\log p(x;\theta)\nabla_{\theta^{\top}}\log p(x;\theta)|_{\theta^{*}}\right\} \\
   &- \mathrm{E}_{*}\left\{\frac{\tilde{p}^{*}(x)}{\eta^{*}(x)}\nabla_{\theta}\log p(x;\theta)|_{\theta^{*}}\right\} \mathrm{E}_{*}\left\{\frac{\tilde{p}^{*}(x)}{\eta^{*}(x)}\nabla_{\theta^{\top}}\log p(x;\theta)|_{\theta^{*}}\right\},
\end{align*}
where  
\begin{align*}
\tilde{p}(x;\theta)=p(x;\theta)/\int p(x;\theta)\mathrm{d}\mu(x), 
\end{align*}
$\tilde{p}^{*}(x)=\tilde{p}(x;\theta^{*})$. We also have  
\begin{align*}
\mathrm{Var}_{*}\{S(x;\theta)|_{\theta^{*}}\}=\mathrm{Var}_{*}\{\nabla_{\theta} \log p(x;\theta)|_{\theta^{*}}\}.
\end{align*}

Next, consider the asymptotic behavior of $\hat{\theta}_{\text{s}}$ in \eqref{eq:breg-separable} when the model is misspecified. We assume $f(x)=x\log x$, as in Example \ref{ex:gene}. In this case, the estimator $\hat{\theta}_{\text{s}}$ converges in probability to $\theta^{*}$, which satisfies the equation $\mathrm{E}_{*}[S(x;\theta)]=0$.
When $f(x)$ is not $x\log x$, a similar result can be obtained. However, the limits of estimators no longer converge to the same $\theta^{*}$. With these settings, we have the following theorem. 
\begin{theorem}
\label{thm:5.1}
Under regularity conditions as in Theorem \ref{thm:4.2}, we have
\begin{align*}
&\sqrt{n}(\hat{\theta}_{\text{s}}-\theta^{*})=\Omega^{{\dagger}^{-1}}_{1m}\mathbb{G}_{n}\left\{\nabla_{\theta}\log p(x;\theta)|_{\tau^{*}}\right\} +\mathrm{o}_{p}(1),\\
&\sqrt{n}(\hat{\theta}_{\text{s}}-\theta^{*})\stackrel{d}{\rightarrow}\mathcal{N}(0,{\Omega^{{\dagger}^{-1}}_{1m}}{\Omega^{\dagger}}_{2m}{\Omega^{{\dagger}^{-1}}_{1m}}).
\end{align*}
The specific forms of $\Omega^{\dagger}_{1m}$ and $\Omega^{\dagger}_{2m}$ are 
\begin{align*}
\Omega^{\dagger}_{1m} 
&=\mathrm{E}_{*}\left[\left\{1-\frac{q(x;\tau)}{\eta^{*}(x)}\right\}\nabla_{\theta^{\top}}\nabla_{\theta} \log p(x;\theta)|_{\tau^{*}}\right]+
\mathrm{E}_{*}\left\{\frac{q(x;\tau)}{\eta^{*}(x)}\nabla_{\theta}\log p(x;\theta)\nabla_{\theta^{\top}}\log p(x;\theta)|_{\tau^{*}}\right\}\\
&-\mathrm{E}_{*}\left\{\frac{q(x;\tau)}{\eta^{*}(x)}\nabla_{\theta}\log p(x;\theta)|_{\tau^{*}}\right \}
\mathrm{E}_{*}\left\{\frac{q(x;\tau)}{\eta^{*}(x)}\nabla_{\theta^{\top}}\log p(x;\theta)|_{\tau^{*}} \right\},
\end{align*}
 and
 \begin{align*}
    \Omega^{\dagger}_{2m}=\mathrm{Var}\{\nabla_{\theta}\log p(x;\theta)|_{\theta^{*}}\}.
\end{align*}
\end{theorem}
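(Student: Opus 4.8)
The plan is to deduce Theorem \ref{thm:5.1} from the full-parameter result of Theorem \ref{thm:mis} by profiling out the normalizing parameter $c$, reducing everything to a block-matrix computation. First I would record that with $f(x)=x\log x$ and $(h_1,h_2)=(1,w)$ the estimator $\hat\theta_{\text{s}}$ is the $\theta$-coordinate of the $Z$-estimator solving $\tilde{\mathrm{E}}_{*}\{\psi(x;\tau,\hat\eta_n)\}=0$ with estimating function $\psi(x;\tau,\eta)=\{q(x;\tau)/\eta(x)-1\}\nabla_\tau\log q(x;\tau)$, obtained by differentiating \eqref{eq:sdrme_mle}. The limit $\tau^{*}=(c^{*},\theta^{*})$ is fixed by $\mathrm{E}_{*}\{S(x;\theta^{*})\}=0$ and $\exp(c^{*})=\int p(x;\theta^{*})\mathrm{d}\mu(x)$; a short calculation (using $\int\tilde p(x;\theta^{*})\mathrm{d}\mu=1$ for the $c$-coordinate and the definition of $S$ for the $\theta$-coordinate) verifies the moment condition $\mathrm{E}_{*}\{\psi(x;\tau^{*},\eta^{*})\}=0$, so $\tau^{*}$ is the correct target even under misspecification.

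Given Theorem \ref{thm:mis}, which supplies $\sqrt{n}(\hat\tau_{\text{s}}-\tau^{*})=\Omega_{1m}^{-1}\mathbb{G}_n\{\nabla_\tau\log q(x;\tau)|_{\tau^{*}}\}+\mathrm{o}_p(1)$, the key observation is that $\nabla_\tau\log q=(-1,\nabla_{\theta^{\top}}\log p)^{\top}$, so the $c$-coordinate of the influence is the centred constant $-1$ and contributes $\mathbb{G}_n[-1]=0$. Writing $\Omega_{1m}$ in $(c,\theta)$ blocks with $r=q(\cdot;\tau^{*})/\eta^{*}$, one reads off $[\Omega_{1m}]_{cc}=\mathrm{E}_{*}(r)=1$ and $[\Omega_{1m}]_{c\theta}=-\mathrm{E}_{*}\{r\,\nabla_{\theta^{\top}}\log p\}|_{\tau^{*}}$. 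I would then extract the $\theta$-coordinate by block inversion, giving $\sqrt{n}(\hat\theta_{\text{s}}-\theta^{*})=[\Omega_{1m}^{-1}]_{\theta\theta}\,\mathbb{G}_n\{\nabla_\theta\log p|_{\tau^{*}}\}+\mathrm{o}_p(1)$, where $[\Omega_{1m}^{-1}]_{\theta\theta}$ is the inverse Schur complement of the $c$-block. The distinguishing rank-one term $\mathrm{E}_{*}\{r\nabla_\theta\log p\}\mathrm{E}_{*}\{r\nabla_{\theta^{\top}}\log p\}$ in $\Omega^{\dagger}_{1m}$ is precisely $[\Omega_{1m}]_{\theta c}[\Omega_{1m}]_{cc}^{-1}[\Omega_{1m}]_{c\theta}$ produced by this profiling, which identifies $\Omega^{\dagger}_{1m}$ with the Schur complement. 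Likewise $\Omega^{\dagger}_{2m}=\mathrm{Var}_{*}\{\nabla_\theta\log p|_{\theta^{*}}\}$ is the $\theta$-block of $\Omega_{2m}=\mathrm{Var}_{*}\{\nabla_\tau\log q\}$, the $c$-row and $c$-column vanishing because that entry is constant. Assembling these yields the stated sandwich $(\Omega_{1m}^{\dagger})^{-1}\Omega^{\dagger}_{2m}(\Omega_{1m}^{\dagger})^{-1}$.

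If I instead had to establish Theorem \ref{thm:mis} itself rather than cite it, the substance would be the semiparametric von Mises expansion linearizing the nonparametric plug-in, exactly as in Theorems \ref{thm:4.2} and \ref{thm:4.4}. I would split $\sqrt{n}\,\tilde{\mathrm{E}}_{*}\{\psi(x;\tau^{*},\hat\eta_n)\}=\mathbb{G}_n\{\psi(\cdot;\tau^{*},\eta^{*})\}+\sqrt{n}[\,\mathrm{E}_{*}\psi(\cdot;\tau^{*},\hat\eta_n)-\mathrm{E}_{*}\psi(\cdot;\tau^{*},\eta^{*})\,]+\mathrm{o}_p(1)$, the first piece handled by a Donsker/equicontinuity argument under (2d)--(2e). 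For the drift I would use the Gateaux derivative $\partial_\eta\psi\,[h]=-\{q/(\eta^{*})^{2}\}h\,\nabla_\tau\log q$ together with the kernel-smoothing identity $\int G(x)\hat\eta_n(x)\mathrm{d}\mu(x)=n^{-1}\sum_j (G*K_\kappa)(x_j)=n^{-1}\sum_j G(x_j)+\mathrm{o}_p(n^{-1/2})$ for $G=r\,\nabla_\tau\log q$, turning the drift into $-\mathbb{G}_n\{r\,\nabla_\tau\log q\}$. Adding the two pieces, the factors $r=q/\eta^{*}$ cancel and the influence collapses to $-\mathbb{G}_n\{\nabla_\tau\log q\}$; dividing by the Jacobian $\Omega_{1m}$ recovers Theorem \ref{thm:mis}. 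Specializing to $r\equiv1$ reduces $\Omega_{1m}$ to $\Omega$ and $\Omega^{\dagger}_{1m}$ to $\mathfrak{I}_{\theta^{*}}$, recovering Theorem \ref{thm:4.2}.

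The main obstacle is this nonparametric plug-in step: bounding the second-order remainder of the von Mises expansion in $\hat\eta_n-\eta^{*}$, which requires the rate $\|\hat\eta_n-\eta^{*}\|_{\infty}=\mathrm{o}_p(n^{-1/4})$ guaranteed by (2c), and simultaneously verifying the equicontinuity $\mathbb{G}_n\{\psi(\cdot;\tau^{*},\hat\eta_n)-\psi(\cdot;\tau^{*},\eta^{*})\}=\mathrm{o}_p(1)$. Misspecification itself is comparatively harmless: it merely moves $r$ away from $1$ and replaces Fisher consistency of the data score by the sandwich variance, leaving the plug-in linearization machinery unchanged; the remaining block-inversion bookkeeping matching the Schur complement to $\Omega^{\dagger}_{1m}$ is then routine.
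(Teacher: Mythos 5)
Your proposal is correct and follows essentially the same route as the paper: the paper first establishes the full-parameter expansion (Lemma \ref{lem:mis}, identical to Theorem \ref{thm:mis}) via the Z-estimator expansion with the nonparametric plug-in drift term, and then extracts the $\theta$-block exactly as you do, using the Woodbury/Schur-complement block inversion with $[\Omega_{1m}]_{cc}=1$, the rank-one cross term giving $\Omega^{\dagger}_{1m}$, and the $c$-component of the influence dropping out because $\nabla_c\log q$ is constant. Your von Mises linearization of the drift (with the cancellation $(1-r)+r=1$ collapsing the influence to $\mathbb{G}_n\{\nabla_\tau\log q\}$) is precisely the computation in the paper's proof of Lemma \ref{lem:mis}.
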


Two implications are observed in Theorem \ref{thm:5.1}. First, when the model includes the true distribution, i.e., $\eta^{*}(x)=q(x;\tau^{*})$, this theorem is reduced to Theorem \ref{thm:4.2}. Second, the resulting form of ${\Omega^{\dagger}}_{1m},{\Omega^{\dagger}}_{2m}$ has a form similar to terms appeared in the asymptotic result of the MLE estimator when the model is normalized. 

\subsection{Misspecified Poisson Model}

Here, we examine the bahavior of each estimator when the model is misspecified. We assume unnormalized parametric models $P(x;\theta) \propto\exp(\theta)^x/x!,x\in \mathbb{N}_{\geq 0}$ based on Poisson distributions. We consider two scenarios based on the true distribution (well-specified case) $\exp(-2.0)2.0^{x}/x!$ and, (misspecified case) $0.5\exp(-2.0)2^{x-0.2}
/(x-0.2)!+0.5\exp(-1.0)/(x-1.2)!$. 

We compared five estimators: \textbf{s-KL}, \textbf{s-Chi}, \textbf{s-JS}, \textbf{ns-$\gamma$} and \textbf{MLE}. The Monte Carlo mean and the standard error of KL divergence between true density and estimated density are presented in Table \ref{tab:well}.
This experiment reveals that the performance of each estimator significantly varies in the misspecified case, but not in the well-specified case. It is indicated that \textbf{s-KL} is preferable in terms of the KL divergence because it has a performance similar to that of MLE, even when the model is misspecified. 

\begin{table}
   \caption{Monte Carlo mean and standard error of the KL divergence between the true density and estimated density scaled by sample size in a Poisson model. Parenthesis indicates a standard error.}
    \centering
    well-specified case  \\
    \begin{tabular}{rcccccc}  \toprule
 $n$    & s-KL  & s-Chi  &  s-JS & ns-$\gamma$ & MLE \\ \midrule
 1000  & 0.26  & 0.26 & 0.27 & 0.26 & 0.26 \\
 &(0.03) & (0.03) & (0.04) & (0.03) & (0.03)\\ 
 2000 & 0.25  & 0.26 & 0.25 & 0.25 &  0.25 \\ 
 &(0.03) & (0.04) & (0.04) & (0.03) & (0.03)\\ \bottomrule
 \hline 
    \end{tabular} 
    \\
    \vspace{.5pt}
misspecified case \\
    \begin{tabular}{rcccccc} \toprule
 $n$     &    s-KL  & s-Chi  &  s-JS & ns-$\gamma$  & MLE  \\\midrule
 1000  & 5.6 & 6.0 &  7.3 & 6.2 & 5.5  \\ 
  &(0.4) & (0.7) & (1.4) & (0.7) & (0.2) & \\ 
 2000 & 11.1 &  11.8 & 14.6 & 12.1 & 10.9 \\ 
 &(0.4) & (0.9) & (1.9) & (0.7) & (0.2)\\    \bottomrule
 \hline
    \end{tabular}
    \label{tab:well}
\end{table}

\section{Additional experiments}

\subsection{Performance of s-KL,\,s-Chi,\,s-JS in RBM}

We compared four estimators: \textbf{s-JS}, \textbf{s-KL}, \textbf{s-Chi} and \textbf{MLE}. Refer to Table \ref{tbl:RBM_KL2}. It is shown that \textbf{s-KL} is generally stable. 

\begin{table}[t]
  \caption{Monte Carlo mean and standard error of the KL divergence between the true density and estimated density scaled by sample size in RBM. Parenthesis indicates a standard error.}
 \label{tbl:RBM_KL2}
  \centering
$\dim\v=5, \dim\h=2$, iteration: 50\\
\begin{tabular}{rcccc} \toprule
      $n$     &    s-JS     &        s-KL &     s-Chi  &   MLE       \\ \hline
      100     &  5.91(2.66) & 5.32(2.22)  & 6.73(4.07) &  5.66(2.72) \\
      500     &  4.94(2.02) & 5.14(2.05)  & 6.88(3.03) &  5.06(1.95) \\
     1000     &  5.35(2.46) & 5.43(2.58)  & 6.45(3.57) &  5.57(2.74) \\ \bottomrule 
\end{tabular}
\vspace{.5pt}

 $\dim\v=8, \dim\h=2$, iteration: 20\\
 \begin{tabular}{rcccc} \toprule
 $n$            & s-JS      & s-KL      &  s-Chi    &    MLE     \\ \hline
      500       &26.3(12.9) &24.7(12.1) &30.2(12.3) & 11.2(4.60) \\
     1000       &18.4(9.62) &14.6(9.28) &17.4(10.4) & 8.38(3.09) \\
     5000       &10.5(3.73) &8.78(3.27) &18.9(7.35) & 8.85(3.24) \\ \bottomrule
\end{tabular}
\end{table}

\subsection{Gaussian and Gamma Distributions}
\label{sec:additional}
We perform toy experiments using Gaussian distribution and gamma distributions. These experiments show that proposed estimator's performance is almost the same as the MLE. In this section, we use median squared errors rather than mean squared errors. 

\begin{table*}[!]
\label{tab:mse}
\centering
\caption{Median squared errors scaled by sample size}
\begin{tabular}{c|c|c|c|c|c|c|c|c|c} \toprule
     & \multicolumn{2}{|c|}{n = 1000} &  \multicolumn{2}{|c|}{n = 4000}\\
      & Gaussian & Gamma  &  Gaussian & Gamma  \\ \midrule
     MLE  & 0.24  & 14.3   & 0.26 &  14.8 \\
    NCE     & 0.26 & $54.4$    & 0.28  &  $61.2$ \\
     s-KL  & 0.39 & $24.6$   & 0.29 & $23.3$ \\
    s-Chi & 0.35 & $15.1$    & 0.43 &  $19.6$ \\
     s-JS  & 0.48  &  $16.3$   & 0.26  & $18.5$   \\
    ns-$\gamma$   & 0.75 & 14.5  &  0.35 &  $36.5$ \\ \bottomrule 
\end{tabular}
\end{table*}

Let us consider simple examples when the baseline measure is a Lebesgue measure. Here we define the following two unnormalized models: Gaussian distribution, gamma distribution as follows:
\begin{align*}
    p(x;\theta) &= \exp(-\theta x^{2}),\,\widetilde{p}(x;\theta)=\sqrt{\frac{\theta}{\pi}}\exp(-\theta x^{2}),\\
p(x;\theta)&=x^{\theta_{1}-1}\exp(-\theta_{2}x),\,\widetilde{p}(x;\theta)= \frac{\theta_{2}^{\theta_{1}}x^{\theta_{1}-1}\exp(-\theta_{2}x)}{\Gamma(\theta_{1})}.
\end{align*}

We write down each corresponding normalized model on the right side. Simulation is replicated for 100 times. Monte Carlo median squared errors are reported in Table \ref{tab:mse}. Note that we use a half-normal distribution for the NCE in the case of the gamma distribution. It is indicated that proposed estimators have the similar performance as the MLE. This supports our theoretical result. However, it seems that each proposed estimator has a slightly different performance. One reason is that our analysis does not take high-order terms into account.

\section{PROOF OF THEOREMS} 

\begin{proof}[Proof of  Theorem \ref{thm:discrete_con}]
Use Theorem 5.11 in \cite{BolthausenErwin2002LoPT}.  We check two conditions; (1a) $\phi(x;\tau,\hat{\eta}_{n})$ belongs to Glivenko Canteli class,(1b) for any $\epsilon>0$, $\inf_{\{\tau:\|\tau-\tau^{*})\|>\epsilon\}}\|\mathrm{E}_{*}\{\phi(x;\tau,\eta^{*}\})\|$. Regarding the first condition, we check in the proof of Theorem \ref{thm:4.2}. Assumption (1b) is verified by the following two conditions: (1c) $\phi(x;\tau,\eta^{*})$ is continuous with respect to $\tau$, (1d) $\mathrm{E}_{*}\{\phi(x;\tau,\eta^{*})\}=0 \iff \tau=\tau^{*}$. The condition (1c) immediately holds assuming that $\theta\to p(x;\theta)$ is continuous. When the identification condition of the model $q(x;\tau_{1})=q(x;\tau_{2})\iff \tau_{1}=\tau_{2}$ holds, (1d) is verified because $\mathrm{E}_{*}[\phi(x;\tau,\eta^{*})]=0 \iff q(x;\tau)=q(x;\tau^{*}) \iff \tau =\tau^{*}$ \citep{Uehara}.
\end{proof}

\begin{proof}[Proof of  Theorem \ref{thm:4.2}]

First, under Assumptions 1-3 and (2a), we can check the following conditions;
\begin{itemize}
    \item $(\tau,\eta)\to \phi(x;\tau,\eta)$ is continuous in an $L^{2}$ space $L^{2}(F_{\eta^{*}})$ at $(\tau^{*},\eta^{*})$
    \item $\{\phi(x;\tau,\eta)\}$ belongs to a Donsker class
    \item $\hat{\tau}_{\text{s}}\stackrel{p}{\longrightarrow }\tau^{*}$ 
    \item Map $\tau\to \phi(x;\tau,\eta)$ is differentiable at $\tau^{*}$ uniformly in a neighrborhood of $\eta^{*}$
    \item The following matrix $\Omega = \mathrm{E}_{*}(\nabla_{\tau}\log q\nabla_{\tau^{\top}}\log q|_{\theta^{*}})$ is non-singular
\end{itemize}
to invoke Theorem 6.17. in \cite{BolthausenErwin2002LoPT}. Especially, the second condition is confirmed as follows. First, $\{q(x;\tau);\tau \in \Theta_{\tau}\}$ and $\{1(x\leq t),t\in \mathbb{R}\}$ belong to Donsker class from Example 19.16 and Example 19.18 in \citep{VaartA.W.vander1998As}. Then, noting that
\begin{align*}
    (q, \eta) \to \phi(q,\tau)
\end{align*}
is a Lipsthicz continuous function, from Example 19.20 in \citep{VaartA.W.vander1998As}, $\phi(q(x);\eta(x))=\phi(x;\tau,\eta)$ is also a Donsker class.  

We have 
\begin{align*}
&\sqrt{n}(\hat{\tau}_{\text{s}}-\tau^{*})=\Omega^{-1}\mathbb{G}_{n}\left\{\nabla_{\tau}\log q(x;\tau)|_{\tau^{*}}\right\} +\mathrm{o}_{p}(1),\, \\
&\sqrt{n}(\hat{\tau}_{\text{s}}-\tau^{*})\stackrel{d}{\rightarrow}\mathcal{N}(0,\Omega^{-1}). 
\end{align*}

The estimator $\hat{\tau}_{\text{s}}$ is considered as the one satisfying $\mathbb{P}_{n}\phi(x;\tau,\eta)|_{\hat{\tau}_{\text{s}},\hat{\eta}_{n}}=0$, where $w(x)=q(x;\tau)/\eta(x)$ and $\phi(x;\tau,\eta) $ is 
\begin{align*}
\left ( \left [f'\{h_{1}(w)\}-f'\{h_{2}(w)\}\right]h_{1}'(w)-f''\{h_{2}(w)\}h'_{2}(w)\left[h_{1}(w)-h_{2}(w)\right]\right )w\nabla_{\tau} \log q(x;\tau).
\end{align*}

From Theorem 6.17. in \cite{BolthausenErwin2002LoPT} based on the above assumptions, we have
\begin{align}
\label{eq:vanvan}
\sqrt{n}(\hat{\tau}_{\text{s}}-\tau^{*})=-V_{\tau^{*},\eta^{*}}^{-1}\sqrt{n}\mathrm{E}_{*}\{\phi(x)|_{\tau^{*},\hat{\eta}_{n}}\}-V_{\tau^{*},\eta^{*}}^{-1}\mathbb{G}_{n} \phi(x)|_{\tau^{*},\eta^{*}}+\mathrm{o}_{p}(1+\sqrt{n}\|\mathrm{E}_{*}[\phi(x)|_{\tau^{*},\hat{\eta}_{n}}]\|),
\end{align}
where $V_{\tau^{*},\eta^{*}}$ is a derivative of $\tau\to \mathrm{E}_{*}\{\phi(x;\tau,\eta^{*})\}$ at $\tau^{*}$. First, we calculate the derivative $V_{\tau^{*},\eta^{*}}$.
The derivative is 
\begin{align*}
 &\nabla_{\tau^{\top}}\mathrm{E}_{*}\{\phi(x;\tau,\eta^{*})|_{\tau^{*}}\}= \mathrm{E}_{*}\{\nabla_{\tau^{\top}}\phi(x;\tau,\eta^{*})|_{\tau^{*}}\}\\
 &=\sqrt{n}\mathrm{E}_{*}[f''\{h_{1}(w)\}h'_{1}(w)- f''\{h_{2}(w)\}h'_{2}(w)]h'_{1}(w)-  f''\{h_{2}(w)\}h'_{2}(w)\{h'_{1}(w)-h'_{2}(w)\}\\
& w\nabla_{\tau}\log q(x;\tau) \{\nabla_{\tau^{\top}}{w}\}
\{\hat{\eta}_{n}(x)-\eta^{*}(x)\}|_{\tau^{*},\eta^{*}} \\
 &= f''(1)\{h'_{1}(1)-h'_{2}(1)\}^{2}\mathrm{E}_{*}(\nabla_{\tau}\log q\nabla_{\tau^{\top}}\log q|_{\tau^{*}}) \\
 &= f''(1)\{h'_{1}(1)-h'_{2}(1)\}^{2}\Omega.
\end{align*}

Next consider each term in \eqref{eq:vanvan}. The second term in \eqref{eq:vanvan} vanishes because $\phi(x)|_{\tau^{*},\eta^{*}}$ is $0$. Therefore, we only analyze the first term in \eqref{eq:vanvan}:

\begin{align}
& \sqrt{n}\mathrm{E}_{*}\{\phi(x)|_{\tau^{*},\hat{\eta}_{n}}\}\nonumber =\sqrt{n}\mathrm{E}_{*}\{\phi(x)|_{\tau^{*},\hat{\eta}_{n}}\} -\sqrt{n}\mathrm{E}_{*}\{\phi(x)|_{\tau^{*},\eta^{*}}\} \nonumber \\
&=\sqrt{n}\mathrm{E}_{*}[ \nabla_{\eta}\phi(x)|_{\tau^{*},\eta^{*}}\{\hat{\eta}_{n}(x)-\eta^{*}(x)\}] \label{eq:first_term3} \\
&+\sqrt{n}\mathrm{E}_{*}\{\phi(x)|_{\tau^{*},\hat{\eta}_{n}}\} -\sqrt{n}\mathrm{E}_{*}\{\phi(x)|_{\tau^{*},\eta^{*}}\} -\sqrt{n}\mathrm{E}_{*}[ \nabla_{\eta}\phi(x)|_{\tau^{*},\eta^{*}}\{\hat{\eta}_{n}(x)-\eta^{*}(x)\}]  \label{eq:first_term4}.
\end{align}

We decompose $\sqrt{n}\mathrm{E}_{*}\{\phi(x)|_{\tau^{*},\hat{\eta}_{n}}\}$ into two terms again. The first term \eqref{eq:first_term3} is 
\begin{align*}
& \sqrt{n}\mathrm{E}_{*}[ \nabla_{\eta}\phi(x)|_{\tau^{*},\eta^{*}}\{\hat{\eta}_{n}(x)-\eta^{*}(x)\}] \\
=& \sqrt{n}\mathrm{E}_{*}[\{f''\{h_{1}(w)\}h'_{1}(w)- f''\{h_{2}(w)\}h'_{2}(w)\}h'_{1}(w)- f''\{h_{2}(w)\}h'_{2}(w)(h'_{1}(w)-h'_{2}(w)\}\{vgn\nabla_{\eta}{w}\} \\
& w\nabla_{\tau}\log q(x;\tau)|_{\tau^{*},\eta^{*}}(\hat{\eta}_{n}(x)-\eta^{*}(x))]  \\
=&-\sqrt{n}\int f''\left(1\right)\{h'_{1}(1)-h'_{2}(1)\}^{2} \frac{\nabla_{\tau}q(x;\tau)}{q(x;\tau)}|_{\tau^{*}}\{\hat{\eta}_{n}(x)-\eta^{*}(x)\}\mathrm{d}\mu(x) \\
=&-\sqrt{n}f''\left(1\right)\{h'_{1}(1)-h'_{2}(1)\}^{2}\mathbb{G}_{n}\left\{\frac{\nabla_{\tau}q(x;\tau)}{q(x;\tau)}|_{\tau^{*}}\right\}.\\
\end{align*}

In addition, the second residual term \eqref{eq:first_term4} vanishes because, for some large $C$ and $\tilde{\eta}$ is a between $\hat{\eta}_{n}$, we have
\begin{align*}
    &\|\sqrt{n}\mathrm{E}_{*}\{\phi_{\tau^{*},\hat{\eta}_{n}}\} -\sqrt{n}\mathrm{E}_{*}\{\phi_{\tau^{*},\eta^{*}}\}-\sqrt{n}\mathrm{E}_{*}[ \nabla_{\eta}\phi(x;\tau,\eta)_{|\tau^{*},\eta^{*}}\{\hat{\eta}_{n}(x)-\eta^{*}(x)\}]\| \\
    &=\sqrt{n}\|\mathrm{E}_{*}[\nabla_{\eta\eta}\phi(x;\tau,\eta)_{|\tau^{*},\tilde{\eta}}\{\hat{\eta}_{n}(x)-\eta^{*}(x)\}^{2}]\| \\
    &\leq C \sqrt{n}\|\mathrm{E}_{*}[\{\hat{\eta}_{n}(x)-\eta^{*}(x)\}^{2}].
\end{align*}
From the second line to the third line, we use an assumption (2b). The last term goes to $0$ in probability. By combining all things and substituting into \eqref{eq:vanvan}, the statement is proved. 
\end{proof}

\begin{proof}[Proof of  Corollary \ref{cor:score}]
The score function $S(x;\theta)$ can be written as 
\begin{align*}
\nabla_{\theta}\log p(x;\theta)-\int \nabla_{\theta}\log p(x;\theta) \frac{ p(x;\theta)}{\int p(x;\theta)\mathrm{d}\mu(x)}\mathrm{d}\mu(x).
\end{align*}
Fisher information matrix $\mathfrak{I}_{\theta^{*}}^{-1}$  is $\mathrm{Var}_{*}\{S(x;\theta)|_{\theta^{*}}\}$, that is,
\begin{align*}
    \mathrm{E}_{*}\{\nabla_{\theta}\log p(x;\theta)\nabla_{\theta^{\top}}\log p(x;\theta)|_{\theta^{*}}\}
    -\mathrm{E}_{*}\{\nabla_{\theta}\log p(x;\theta)|_{\theta^{*}}\}\mathrm{E}_{*}\{\nabla_{\theta^{\top}}\log p(x;\theta)|_{\theta^{*}}\}.
\end{align*}
On the other hand, the component corresponding $\theta^{*}$ in $\Omega^{-1}$ can be also written as 
\begin{align*}
\mathrm{E}_{*}\{\nabla_{\theta}\log p(x;\theta)\nabla_{\theta^{\top}}\log p(x;\theta)|_{\theta^{*}}\}
    -\mathrm{E}_{*}\{\nabla_{\theta}\log p(x;\theta)|_{\theta^{*}}\}\mathrm{E}_{*}\{\nabla_{\theta^{\top}}\log p(x;\theta)|_{\theta^{*}}\}
\end{align*}
from Theorem \ref{thm:4.2} and Woodbury formula.
This is the same as the Fisher information matrix. This concludes the proof. 
\end{proof}

\begin{proof}[Proof of  Theorem \ref{thm:4.4}]

We can do in the proof of Theorem \ref{thm:4.2}. Here, $\eta(x)$ belongs to a Donsker class from Example 19.24 \citep{VaartA.W.vander1998As}; therefore, $\phi(x;\tau,\eta)$ belongs to Donsker class from Lipschitz continuous property. The problem is a drift term. We can derive the given theorem by calculating the drift term in the same way.
The drift term $\sqrt{n}\mathrm{E}_{*}[\phi_{\tau^{*},\hat{\eta}_{n}}]$ is decomposed into two terms, the main term:

\begin{align*}
\sqrt{n}\int  \frac{\nabla_{\tau}q(x;\tau)}{q(x;\tau)}|_{\tau^{*}}\left\{\frac{1}{nh^{d_{x}}}\sum_{i=1}^{n}K\left(\frac{x-x_{i}}{h}\right)-\eta^{*}(x)\right\}\mathrm{d}\mu(x),
\end{align*}
and the residual term. The main term corresponds to the term \eqref{eq:first_term3} in Theorem \ref{thm:4.2} and the residual term corresponds to the term \eqref{eq:first_term4} in Theorem \ref{thm:4.2}. As revealed in the proof, the residual term is written as $\mathrm{O}_{p}(\sqrt{n}\| \hat{\eta}-\eta^{*}(x) \|^{2})$. This term is equal to the order $\mathrm{o}_{p}(1)$ because  $\frac{\nu}{2\nu+d_{x}}>1/4$ holds from the assumption (2e). 
Next, we have
\begin{align*}
    \sqrt{n}\int  \frac{\nabla_{\tau}q(x;\tau)}{q(x;\tau)}|_{\tau^{*}}\left\{\frac{1}{nh^{d_{x}}}\sum_{i=1}^{n}K\left(\frac{x-x_{i}}{h}\right)\mathrm{d}\mu(x)-\mathrm{d}\mathbb{P}_{n}(x)\right\} = \mathrm{o}_{p}(1).
\end{align*}

This holds from Theorem 8.11 in \cite{newey} using assumptions (2d) and (2e). 
Then, the drift term becomes
\begin{align*}
     \sqrt{n}\mathrm{E}_{*}[\phi_{\tau^{*},\hat{\eta}_{n}}]
     &= \sqrt{n}\int  \frac{\nabla_{\tau}q(x;\tau)}{q(x;\tau)}|_{\tau^{*}}\left\{\mathrm{d}\mathbb{P}_{n}(x)-\eta^{*}(x)\right\}\mathrm{d}\mu(x)+\mathrm{o}_{p}(1) \\
     &= \mathbb{G}_{n} \left\{ \frac{\nabla_{\tau}q(x;\tau)}{q(x;\tau)}|_{\tau^{*}}\right\}+\mathrm{o}_{p}(1).
\end{align*}
We have calculated the drift term. For the rest of the proof, it is the same as the proof in Theorem \ref{thm:4.2}.
\end{proof}

\begin{proof}[Proof of  Theorem \ref{thm:ps-gamma}]
We redefine $\sigma \equiv (\theta^{\top},c_{1},c_{2})^{\top}$. To avoid abuse of notations, we write $U_{\alpha,\beta}(x;\sigma)$ as $U(x)$.

As in the proof of Theorem \ref{thm:4.2}, we have
\begin{align}
\sqrt{n}(\hat{\sigma}_{\text{ns-}\gamma}-\sigma^{*}) 
=-V_{\sigma^{*},\eta^{*}}^{-1}\sqrt{n}\mathrm{E}_{*}\{U(x)|_{\sigma^{*},\hat{\eta}_{n}}\}-V_{\sigma^{*},\eta^{*}}^{-1}\mathbb{G}_{n} U(x)|_{\sigma^{*},\eta^{*}} 
+\mathrm{o}_{p}(1+\sqrt{n}\|\mathrm{E}_{*}\{U(x)|_{\sigma^{*},\hat{\eta}_{n}}\}\|),
\label{eq:tktk}
\end{align}
where $\hat{\sigma}_{\text{ns-}\gamma}$ is a solution to $\mathrm{\tilde{E}}[U(x;\sigma)]=0$ and  $V_{\sigma^{*},\eta^{*}}$ is a derivative of the map $\sigma \to \mathrm{E}_{*}\{U(x;\sigma,\eta^{*})\}$ at $\sigma^{*}$. 

First, we calculate the derivative $V_{\sigma^{*},\eta^{*}}$.
This becomes

\begin{align*}
    \mathrm{E}_{*}\left[\begin{pmatrix}
    (\beta-\alpha)\nabla_{\theta} s(x;\theta)s(x;\theta)^{\top} & -s(x;\theta)& s(x;\theta) \\ 
    (\beta-1)s(x;\theta)^{\top}\exp(c_{1}) & \exp(c_{1})& 0 \\
    (\alpha-1)s(x;\theta)^{\top}\exp(c_{2}) & 0 &\exp(c_{2}). \\
    \end{pmatrix}
    \right],
\end{align*}

which is evaluated at $\sigma^{*}$ and $s(x;\theta)=\nabla_{\theta}\log p(x;\theta)$.
The term corresponding $\theta$ in the above matrix $V_{\sigma^{*},\eta^{*}}^{-1}$ is 
\begin{align*}
    &(\beta-\alpha)^{-1}\left[\mathrm{E}_{*}\{\nabla_{\theta}\log p(x;\theta)\nabla_{\theta^{\top}} \log p(x;\theta)\}- 
    \mathrm{E}_{*}\{\nabla_{\theta}\log p(x;\theta)\}\mathrm{E}_{*}\{\nabla_{\theta^{\top}} \log p(x;\theta)\}\right]^{-1}|_{\theta^{*}}\\
    &=(\beta-\alpha)^{-1}\mathfrak{J}_{\theta^{*}}^{-1}.
\end{align*}
Then, we analyze each term in \eqref{eq:tktk}. First of all, the second term in \eqref{eq:tktk} becomes zero because $U(x;\sigma^{*},\eta^{*})=0$. Therefore, we only consider the first term in \eqref{eq:tktk}. We have

\begin{align*}
\sqrt{n}\mathrm{E}_{*}\{U(x)|_{\sigma^{*},\hat{\eta}_{n}}\}
&=
\sqrt{n}\mathrm{E}_{*}[ \nabla_{\eta}U(x)_{|\sigma^{*},\eta^{*}}\{\hat{\eta}_{n}(x)-\eta^{*}(x)\}]+\mathrm{o}_{p}(1)\\
&=\sqrt{n}\mathrm{E}_{*}\left[
\begin{pmatrix}
(\alpha-\beta)\frac{\nabla_{{\theta}}\log p(x;\theta)}{\eta^{*}(x)}  \\
(\beta-1)/\eta^{*}(x) \\
(\alpha-1)/\eta^{*}(x) \\
\end{pmatrix}
\{\hat{\eta}_{n}(x)-\eta^{*}(x)\}\right]+\mathrm{o}_{p}(1).
\end{align*}

Therefore, the first term corresponding $\theta$ in the above equation becomes
\begin{align*}
&\mathfrak{J}_{\theta^{*}}^{-1}
 \sqrt{n}\int \nabla_{\theta}\log p(x;\theta)|_{\theta^{*}}\{\hat{\eta}_{n}(x)-\eta^{*}(x)\}\mathrm{d}\mu(x) \\
 &=\mathfrak{J}_{\theta^{*}}^{-1}\mathbb{G}_{n}\left\{\nabla_{\theta}\log p(x;\theta)|_{\theta^{*}}\right \}.
\end{align*}

Finally, we get
\begin{align*}
\sqrt{n}(\hat{\theta}_{\text{ns-}\gamma}-\theta^{*})=\mathfrak{J}_{\theta^{*}}^{-1}\mathbb{G}_{n}\{\nabla_{\theta}\log p(x;\theta)|_{\theta^{*}}\}+\mathrm{o}_{p}(1).
\end{align*}
\end{proof}

\begin{proof}[Proof of Theorem \ref{thm:concavity}]
Let us define $\ell_i(\tau)$ as the loss for the sample $x_i$, i.e., 
\begin{align*}
\ell_i(\tau)= -f'(z_i)+w_i f'(z_i)-f(z_i),
\end{align*}
where $z_i=q(x_i;\tau)/\hat{\eta}(x_i)$. 
The loss function is expressed by the total sum of $\ell_i(\tau)$ over all samples. 
For the unnormalized exponential model, 
some calculation yields the Hessian matrix of $\ell_i(\tau)$, 

\begin{align*}
  \nabla^2 \ell_i(\tau)  &=
 \left[
 f''(z_i)z_i^2+\left(z_i-1\right)\{f'''(z_i)z_i^2+f''(z_i)z_i\}
\right]
 \phi(x_i)\phi(x_i)^{\top}\\
 &= z_i\left\{(2z_i-1)f''(z_i) + z_i(z_i-1)f'''(z_{i})\right\} \phi(x_i)\phi(x_i)^{\top}.
\end{align*}

The assumption of the theorem guarantees that the coefficient above is non-negative; hence, the Hessian matrix of $\ell_i(\tau)$ is non-negative definite, so is the
loss function. Eventually, the loss function is convex in the parameter $\tau$. 
\end{proof}

\begin{proof}[Proof of  Theorem \ref{thm:mis}]
The estimator $\hat{\tau}_{\text{s}}$ can be considered as the one satisfying $\mathbb{P}_{n}\phi_{\hat{\tau}_{\text{s}},\hat{\eta}_{n}}=0$, where
\begin{align*}
    \phi(x;\tau,\eta)=\nabla_{\tau}\log q(x;\tau)-\left\{\frac{q(x;\tau)}{\eta(x)}\right\}\nabla_{\tau}\log q(x;\tau).
\end{align*}
From Theorem 6.17. \cite{BolthausenErwin2002LoPT}, we have
\begin{align}
\label{eq:vanvan3}
\sqrt{n}(\hat{\tau}_{\text{s}}-\tau^{*})
=-V_{\tau^{*},\eta^{*}}^{-1}\sqrt{n}\mathrm{E}_{*}\{\phi|_{\tau^{*},\hat{\eta}_{n}}\}-V_{\tau^{*},\eta^{*}}^{-1}\mathbb{G}_{n} \phi(x_{i};\tau^{*},\eta^{*}) +\mathrm{o}_{p}(1+\sqrt{n}\|\mathrm{E}_{*}[\phi|_{\tau^{*},\hat{\eta}_{n}}]\|), 
\end{align}
where $V_{\tau^{*},\eta^{*}}$ is a derivative of $\tau\to \mathrm{E}_{*}\{\phi(x;\tau,\eta^{*})\}$ at $\eta^{*}$. 

First, we will see a more specific form of $\tau^{*}$. The value $\tau^{*}$ satisfy the equation $\mathrm{E}_{*}\{\phi(x;\tau,\eta^{*})\}=0$. Noting that $\nabla_{\tau^{\top}}\log q(x;\tau)=(1,\nabla_{\theta^{\top}}\log p(x;\theta))$, we can get the form of $c^{*}$ and $\theta^{*}$ specified in the statement. 

Next, we calculate the derivative $V_{\tau^{*},\eta^{*}}$.
The derivative is 
\begin{align*}
 &\nabla_{\tau^{\top}}\mathrm{E}_{*}\{\phi(x;\tau,\eta)|_{\tau^{*}}\} \\
 &= \mathrm{E}_{*}\{\nabla_{\tau^{\top}}\phi(z;\tau,\eta)|_{\tau^{*}}\}\\
 &= \mathrm{E}_{*}\left[\left\{-1+\frac{q(x;\tau)}{\eta(x)}\right\}\nabla_{\tau^{\top}}\nabla_{\tau}\log q(x;\tau)|_{\tau^{*}}\right]-\mathrm{E}_{*}\left\{\frac{q(x;\tau)}{\eta(x)}\nabla_{\tau}\log q(x;\tau)\nabla_{\tau^{\top}}\log q(x;\tau)\right\}|_{\tau^{*}} \\
 &=-\Omega_{1}.
\end{align*}

Next, consider each term in \eqref{eq:vanvan3}. The second term is 
\begin{align*}
   \Omega_{1}^{-1}\mathbb{G}_{n}\left[\left\{1-\frac{q(x;\tau)}{\eta(x)}\right\}|_{\tau^{*},\eta^{*}}\nabla_{\tau}\log q(x;\tau)|_{\tau^{*}}\right].
\end{align*}
The first term is 

\begin{align*}
\sqrt{n}\Omega_{1m}^{-1}\mathrm{E}_{*}(\phi|_{\tau^{*},\hat{\eta}_{n}}) &=\sqrt{n}\Omega_{1m}^{-1}\mathrm{E}_{*}(\phi|_{\tau^{*},\hat{\eta}_{n}}) -\sqrt{n}\mathrm{E}_{*}(\phi|_{\tau^{*},\eta^{*}})  \nonumber \\
&=\sqrt{n}\Omega_{1m}^{-1}\mathrm{E}_{*}\left[ \nabla_{\eta}\phi(x)_{|\tau^{*},\eta^{*}}\{\hat{\eta}_{n}(x)-\eta^{*}(x)\}\right] +\mathrm{o}_{p}(1)\\
&=\sqrt{n}\Omega_{1m}^{-1}\mathrm{E}_{*}\left[
\frac{q(x;\tau)}{\eta^{2}(x)}
\nabla_{\tau}\log q(x;\tau)|_{\tau^{*},\eta^{*}}
\{\hat{\eta}_{n}(x)-\eta^{*}(x)\}\right]+\mathrm{o}_{p}(1)\\
&=\sqrt{n}\Omega_{1m}^{-1}\int  \frac{q(x;\tau)}{\eta(x)}|_{\tau^{*},\eta^{*}}\nabla_{\tau}\log q(x;\tau)\{\hat{\eta}_{n}(x)-\eta^{*}(x)\}\mathrm{d}\mu(x)
+\mathrm{o}_{p}(1) \\
&= \Omega_{1m}^{-1}\mathbb{G}_{n}\left\{\frac{q(x;\tau)}{\eta(x)}|_{\tau^{*},\eta^{*}}\nabla_{\tau}\log q(x;\tau)|_{\tau^{*}}\right\}+\mathrm{o}_{p}(1).
\end{align*}

Adding the first term and the second term, we get
\begin{align*}
\sqrt{n}(\hat{\tau}_{\text{s}}-\theta^{*})=\Omega_{1m}^{-1}\mathbb{G}_{n}\left\{\nabla_{\tau}\log q(x;\tau)|_{\tau^{*},\eta^{*}}\right\}+\mathrm{o}_{p}(1).
\end{align*}
Therefore, we conclude that $\sqrt{n}(\hat{\tau}_{\text{s}}-\theta^{*})$ converges to the normal distribution $\mathcal{N}(0,\Omega_{1m}^{-1}\Omega_{2m}\Omega_{1m}^{-1})$. 
\end{proof}

\begin{proof}[Proof of  Theorem \ref{thm:5.1}]

We calculate matrix, corresponding to $\theta$ term in Theorem \ref{thm:mis}. The matrix $\Omega_{1m}$ in Theorem \ref{thm:mis} is equal to the following block matrix:
\begin{align*}
\begin{bmatrix}
\Omega_{11} & \Omega_{12} \\
\Omega_{21} & \Omega_{22}
\end{bmatrix},
\end{align*}
where $\Omega_{11}=1$,
\begin{align*}
\Omega_{21}=\mathrm{E}_{*}\left\{\nabla_{\tau}\log q(x;\tau)\frac{q(x;\tau)}{\eta^{*}(x)}|_{\tau^{*}}\right\}
\end{align*}
and 
\begin{align*}
\Omega_{22} = \mathrm{E}_{*}\left[\left\{1-\frac{q(x;\tau)}{\eta^{*}(x)}\right\}\nabla_{\theta^{\top}}\nabla_{\theta}\log q(x;\tau)|_{\tau^{*}}\right] 
+\mathrm{E}_{*}\left\{\frac{q(x;\tau)}{\eta^{*}(x)}\nabla_{\theta}\log q(x;\tau)\nabla_{\theta^{\top}}\log q(x;\tau)|_{\tau^{*}}\right\}.
\end{align*}
From Woodbury formula, the corresponding term to $\theta$ in $\Omega_{1m}^{-1}$ is $\Omega^{\dagger}_{1m}$ where 

\begin{align*}
\Omega^{\dagger}_{1m} 
&=\mathrm{E}_{*}\left[\left\{1-\frac{q(x;\tau)}{\eta^{*}(x)}\right\}\nabla_{\theta^{\top}}\nabla_{\theta} \log p(x;\theta)|_{\tau^{*}}\right]+
\mathrm{E}_{*}\left\{\frac{q(x;\tau)}{\eta^{*}(x)}\nabla_{\theta}\log p(x;\theta)\nabla_{\theta^{\top}}\log p(x;\theta)|_{\tau^{*}}\right\}\\
&-\mathrm{E}_{*}\left\{\frac{q(x;\tau)}{\eta^{*}(x)}\nabla_{\theta}\log p(x;\theta)|_{\tau^{*}}\right \}
\mathrm{E}_{*}\left\{\frac{p(x;\theta)}{\eta^{*}(x)}\nabla_{\theta^{\top}}\log p(x;\theta)|_{\tau^{*}} \right\}.
\end{align*}

On the other hand, the corresponding part in $\Omega_{2m}$ is $\Omega^{\dagger}_{2m}$, where \begin{align*}
    \Omega^{\dagger}_{2m}=\mathrm{Var}\{\nabla_{\theta}\log p(x;\theta)|_{\theta^{*}}\},
\end{align*}
noting that $\nabla_{\tau^{\top}} \log q(x;\tau)=\{1,\nabla_{\theta^{\top}} \log p(x;\theta)\}$.
This concludes the proof. 
\end{proof}

Note the difference between the normalized case and unnormalized case is that $\tilde{p}(x;\theta^{*})/\eta^{*}$ is used when the model is normalized; while,  $q(x;\tau^{*})/\eta^{*}$ is used in $\Omega^{\dagger}_{1m}$ and $\Omega^{\dagger}_{2m}$ when the model is unnormalized.

\end{document}